\newcommand{\E}{\mathbb{E}}
\theoremstyle{plain}
\newtheorem{theorem}{Theorem}[section]
\theoremstyle{definition}
\newtheorem{definition}[theorem]{Definition}
\newtheorem{assumption}[theorem]{Assumption}
\theoremstyle{remark}
\theoremstyle{example}
\newtheorem{example}{Example}
\definecolor{cvprblue}{rgb}{0.21,0.49,0.74}
\definecolor{citecolor}{HTML}{2980b9}
\definecolor{linkcolor}{HTML}{c0392b}
\begin{document}

\twocolumn[
\icmltitle{Understanding Nonlinear Implicit Bias via Region Counts in Input Space}



\icmlsetsymbol{equal}{*}

\begin{icmlauthorlist}
\icmlauthor{Jingwei Li}{equal,tsinghua,sqz}
\icmlauthor{Jing Xu}{equal,tsinghua}
\icmlauthor{Zifan Wang}{tsinghua,sqz}
\icmlauthor{Huishuai Zhang}{pku}
\icmlauthor{Jingzhao Zhang}{tsinghua,sqz}

\end{icmlauthorlist}

\icmlaffiliation{tsinghua}{Institute for Interdisciplinary Information Sciences,
Tsinghua University}
\icmlaffiliation{pku}{Wangxuan Institute of Computer Technology, Peking University}
\icmlaffiliation{sqz}{Shanghai Qizhi Institute}

\icmlcorrespondingauthor{Jingwei Li}{ljw22@mails.tsinghua.edu.cn}
\icmlcorrespondingauthor{Jingzhao Zhang}{jingzhaoz@mail.tsinghua.edu.cn}

\icmlkeywords{Machine Learning, ICML}

\vskip 0.3in
]



\printAffiliationsAndNotice{\icmlEqualContribution}  

\begin{abstract}
One explanation for the strong generalization ability of neural networks is implicit bias. Yet, the definition and mechanism of implicit bias in non-linear contexts remains little understood. In this work, we propose to characterize implicit bias by the count of connected regions in the input space with the same predicted label. Compared with parameter-dependent metrics (e.g., norm or normalized margin), region count can be better adapted to nonlinear, overparameterized models, because it is determined by the function mapping and is invariant to reparametrization. Empirically, we found that small region counts align with geometrically simple decision boundaries and correlate well with good generalization performance. We also observe that good hyper-parameter choices such as larger learning rates and smaller batch sizes can induce small region counts. We further establish the theoretical connections and explain how larger learning rate can induce small region counts in neural networks. 
\end{abstract}

\section{Introduction}
One mystery in deep neural networks lies in their ability to generalize, despite having significantly more learnable parameters than the number of training examples~\citep{zhang2017understanding}. The choice of network architectures, including factors such as nonlinearity, depth, and width, along with training procedures like initialization, optimization algorithms, and loss functions, can result in vastly diverse generalization performance~\citep{sutskever2013importance, smith2017don, wilson2017marginal, li2019towards}. The varied generalization abilities exhibited by neural networks are often explained by many researchers through the theory of \textit{implicit bias}~\citep{brutzkus2017sgd, soudry2018implicit}. Implicit bias refers to the inherent tendencies in how neural networks learn and generalize from the training data, even without explicit regularizations or constraints.

The implicit bias of linear neural networks has been extensively studied. One of the classical setting is linear classification with logistic loss. \citet{brutzkus2017sgd, soudry2018implicit, arora2019implicit} show that the parameter converges to the direction that maximizes the $L_2$ margin. For regression problems, it is proved that gradient descent or stochastic gradient descent converges to a parameter that is closest to the initialization in terms of $L_2$ norm~\citep{gunasekar2018characterizing}. The results of the linear regression model can be extended to deep linear neural networks by generalizing the definition of min-norm and max-margin solutions~\citep{ji2018gradient, vaskevicius2019implicit, woodworth2020kernel}.

Compared to linear models, defining implicit biases in non-linear networks poses significant challenges. One line of work studies homogeneous networks and demonstrates that gradient flow solutions converge to a KKT point of the max-margin problem~\citep{lyu2019gradient, ji2020directional, wang2021implicit,jacot2022feature}. Further research extends this analysis, showing that gradient flow converges to a max-margin solution under various norms~\citep{ongie2019function, chizat2020implicit}. Other studies focus on describing the implicit bias of neural networks using sharpness, such as~\citep{foret2020sharpness, montufar2022sharp, andriushchenko2023modern}.

\begin{figure*}[t]
  \centering
    \includegraphics[scale = 0.525]{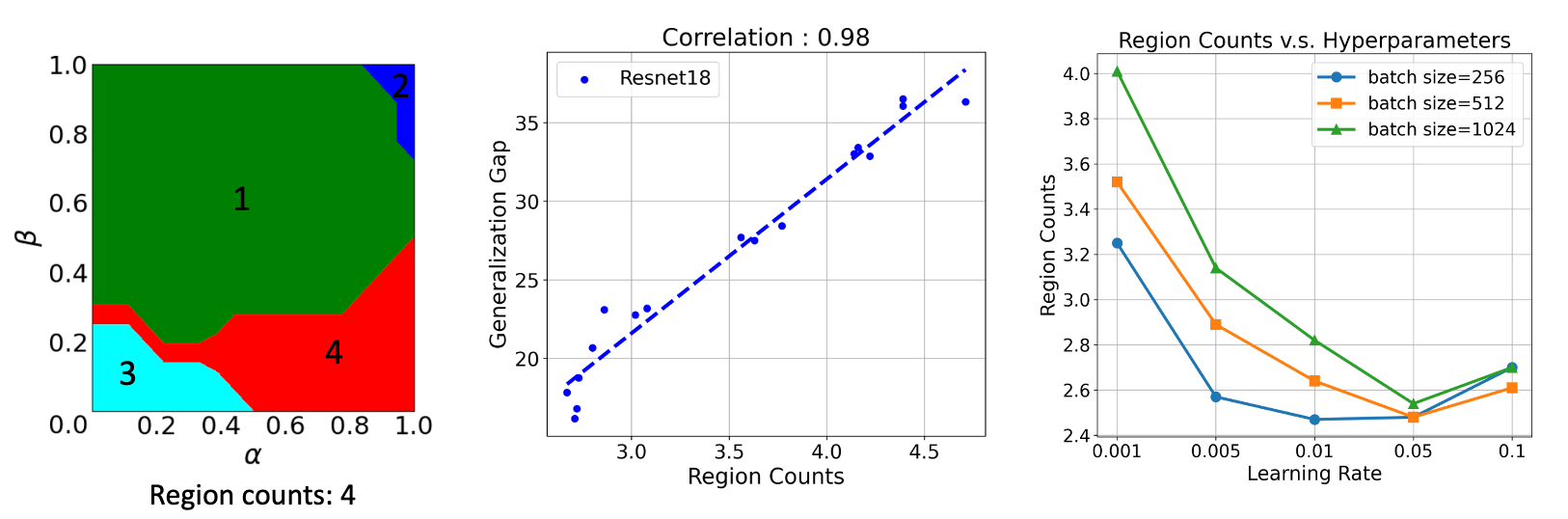}
        \caption{\textbf{A schematic illustration of main results in this paper.} Left: The region counts in 2-dimension input space. Each distinct region represents an area where the neural network makes the same prediction for all points within that region. Middle: A strong correlation between region counts and the generalization gap. Right: Larger learning rate or smaller batch size induces smaller region counts.
}\label{fig:teaser}\vspace{-0.4cm}
\end{figure*}

We note that previous definitions of implicit bias in neural networks mostly focus on certain metrics of network parameters. While these approaches enable explicit analyses of training trajectories, they encounter new challenges when applied to nonlinear networks: reparametrization of the network can preserve the function mapping but give completely different parameters, and consequently, different implicit biases.  We will discuss this point in detail in Section~\ref{sec:limit}.

Motivated by the above studies, we focus on leveraging decision boundaries in the input space to characterize implicit bias. We adopt a metric called region count, which measures the average number of connected components in the decision regions of a predictor (see Figure~\ref{fig:teaser}). This idea was initially introduced by \citet{somepalli2022can}, who used it under the name fragmentation score to study the double descent phenomenon by counting decision regions on two-dimensional planes defined by training examples. In contrast, we extend this idea beyond 2D subspaces and generalize the region count to arbitrary-dimensional input subspaces, which enables a more comprehensive geometric characterization.

To compute region count, we project the input space onto low-dimensional subspaces, use the trained network to predict the labels of points within these subspaces, and count the number of connected regions that share the same predicted label. Unlike previous studies on linear region count~\citep{hanin2019complexity, hanin2019deep, safran2022effective}, which focus on activation patterns and representational capacity, the metric of region count is label-dependent and reflects the functional behavior of the classifier. We find that region count correlates strongly with the generalization gap—defined as the difference between training and test errors. As illustrated in Figure~\ref{fig:teaser}, models with fewer decision regions tend to generalize better.

Furthermore, we show that neural networks trained with large learning rate or small batch size, which are typically deemed beneficial for generalization, are biased towards solutions that have small region counts. Therefore, region count empirically serves as an accurate generalization metric as well as an implicit bias indicator. We also provide theoretical analyses to explain this phenomenon. We prove that for two-layer ReLU neural networks, gradient descent with large learning rate induces a small region count, which accords well with our empirical findings. 

The main contributions of this paper are listed as follows: 
\begin{enumerate}
    \item We use the region count to to systematically characterize implicit bias. Through extensive experiments, we verify a strong correlation between region count and the generalization gap. This correlation remains robust across different learning methods, datasets, training parameters, and counting methods.
    \item We assess the factors that induce small region count, discovering that training with larger learning rates and smaller batch sizes typically results in fewer regions. This provides a possible cause for the implicit bias in neural networks.
    \item We conduct theoretical analyses on region counts, and show that for two-layer ReLU neural networks, gradient descent with large learning rate induces a small region count.
\end{enumerate}

\section{Related Works}
\paragraph{Implicit Bias of Linear Neural Network}
The implicit bias in linear neural networks are thoroughly investigated in recent works. For linear logistic regression on linearly separable data, full-batch gradient descent converges in the direction of the maximum margin solution~\citep{soudry2018implicit}. This foundational work has various follow-ups, including extensions to  non-linearly-separable data~\citep{ji2018risk, ji2019implicit}, stochastic gradient descent~\citep{nacson2019stochastic}, and other loss functions and optimizers~\citep{gunasekar2018characterizing}.

These findings in linear logistic regression are generalized to deep linear networks. For fully-connected neural networks with linear separable data, \citet{ji2018gradient} shows that the direction of weight also converges to $L_2$ max-margin solution. For linear diagonal networks, the gradient flow maximizes the margin with respect to a specific quasi-norm that is related to the depth of network~\citep{gunasekar2018implicit, woodworth2020kernel, pesme2021implicit}, leading to a bias towards sparse linear predictors as the depth goes to infinity. This sparsity bias also exists in linear convolutional networks~\citep{gunasekar2018implicit, yun2020unifying}.

\paragraph{Implicit Bias of Non-linear Neural Network}
The non-linearity of modern non-linear neural networks pose challenges to studying its implicit bias. Initial works in this area~\citep{lyu2019gradient, ji2020directional} focus on homogeneous networks. These studies show that with exponentially-tailed classification losses, both gradient flow and gradient descent converge directionally to a KKT point in a maximum-margin problem. \citet{wang2021implicit} consider a more general setup that includes different optimizers and prove that both Adam and RMSProp are capable of maximizing the margin in neural networks while Adagrad is not. \citet{ongie2019function,chizat2020implicit} showcased a bias towards maximizing the margin in a variation norm for infinite-width two-layer homogeneous networks. \citet{lyu2021gradient, jacot2022feature} identified margin maximization in two-layer Leaky-ReLU networks trained with linearly separable and symmetric data. More recent investigations into non-linear neural networks, such as~\citep{jacot2022implicit}, focus on the homogeneity of the non-linear layer, demonstrating an implicit bias characterized by a novel non-linear rank.

\paragraph{Region Counts of Neural Network}
Prior work has extensively studied the number of linear regions in ReLU networks~\citep{hanin2019complexity, hanin2019deep}, where a linear region refers to a set of inputs sharing the same activation pattern. For example, \citet{safran2022effective} showed that for a two-layer ReLU network with width $r$, gradient flow converges directionally to a network with at most $O(r)$ linear regions. Other studies~\citep{serra2018bounding, cai2023getting} demonstrated that increasing the number of linear regions can enhance representational capacity and potentially improve performance. However, linear regions are defined independently of output labels and thus primarily reflect the expressiveness of the network rather than its generalization behavior.

In contrast, decision regions are defined as connected subsets of the input space that correspond to the same predicted label. \citet{nguyen2018neural} investigated decision regions over the full input space and showed that, under certain conditions, each class tends to form a single connected region. However, connectivity in the full space does not necessarily imply connectivity in a subspace—for instance, two points may be connected in 3D but disconnected in a 2D slice. Subspace analysis thus reveals a richer and more nuanced structure of decision boundaries.

The paper~\citep{somepalli2022can} is the most relevant to our work, which is the first to systematically analyze the number of decision regions in input subspaces. They introduce the fragmentation score, defined as the average number of decision regions over 2D planes determined by triplets of training points. Their study focuses on illustrating how fragmentation varies with network width and its connection to the double descent phenomenon. While their approach provides important empirical observations, the analysis remains largely qualitative and limited to specific architectural variations.

Our work extends this line of research in several key directions. First, we adopt a more general definition of region count over arbitrary-dimensional subspaces, rather than restricting to two-dimensional planes. This allows us to capture more comprehensive geometric information and supports quantitative analysis. Second, we systematically investigate the correlation between region count and the generalization gap across a variety of architectures, training hyperparameters, and datasets. Third, we further offer a theoretical analysis of region count and study its behavior under distribution shifts. To the best of our knowledge, we are the first to provide a comprehensive empirical and theoretical analysis that links region count to generalization ability.

\section{Motivation}
\begin{figure*}[t]\centering
\begin{tabular}{ccc}
\includegraphics[scale=0.48]{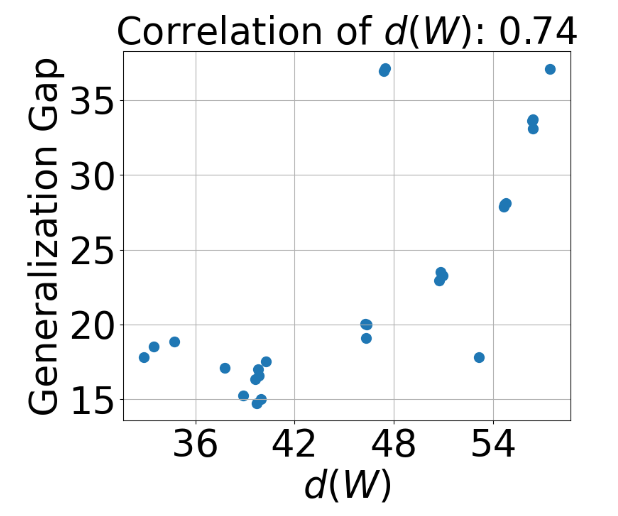} &
\includegraphics[scale=0.479]{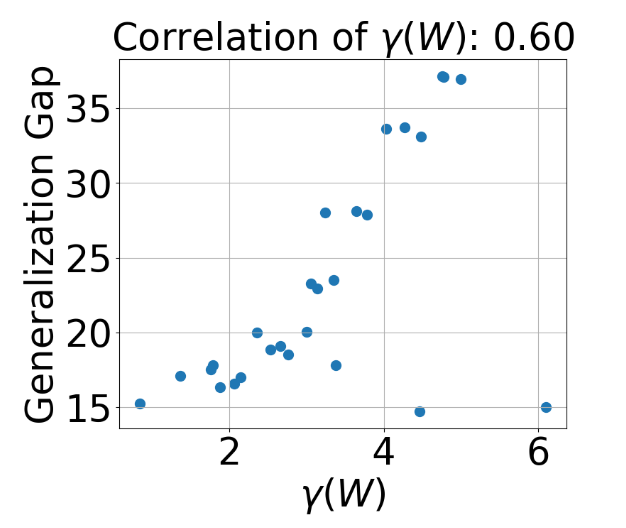} &
\includegraphics[scale=0.485]{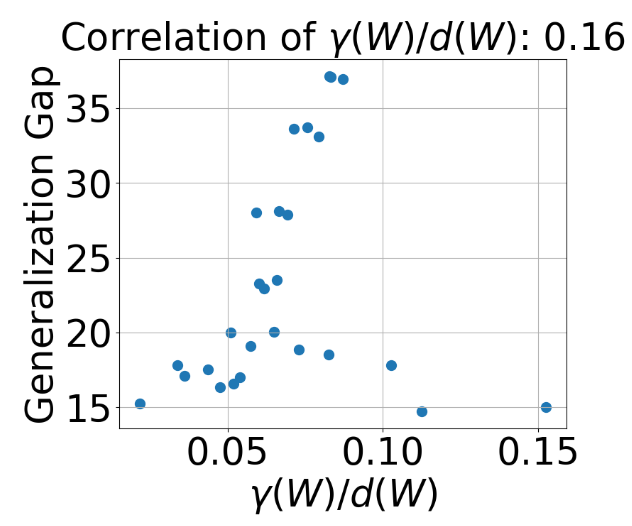} \\
\end{tabular}
\caption{\textbf{Norm-based and margin-based measures may not be predictive of generalization gaps.} We train ResNet18 on the CIFAR-10 dataset using various hyperparameters. These implicit bias measures can be ineffective for general non-linear neural networks.} \label{fig:limit} \vspace{-0.3cm}
\end{figure*}
\label{sec:limit}
Norm-based and margin-based characterizations belong to the most popular measures of implicit bias. Various definitions for norm and margin exist. For simplicity, we consider the following two definitions.
\begin{example}[Norm and Margin]
    Let $W=\{W_1,\cdots,W_l\}$ denote the post-training weight parameters of an $l$-layer neural network $f_W(x)=W_l\sigma(W_{l-1}\cdots W_2\sigma(W_1 x))$, with $\sigma(\cdot)$ as the ReLU activation function. Denote the weight initialization as $W^0=\{W_1^0,\cdots,W_l^0\}$. Consider the Frobenious norm between network weights and initialization: $$d(W) = \sqrt{\sum_{i=1}^l \|W_i-W_i^0\|_F^2},$$ and the output-space margin $$\gamma(W)=\mathbb{E}_{(x,y) \in D_{train}}\left[f(x)_y - \max_{i\neq y} f(x)_i\right].$$
\end{example}
$d(\cdot)$ and $\gamma(\cdot)$ are commonly used indicator for implicit bias of linear models~\citep{soudry2018implicit,ji2018risk}. 
However, both of them are not invariant to network reparameterization. We can construct a different set of network weight parameters by scaling the parameters as $\hat{W}=\{2W_1,\frac{1}{2}W_2, \cdots,W_l\}$, such that $f_W=f_{\hat{W}}$ but $d(W)\neq d(\hat{W})$ in general. 
Similarly, we can scale the last layer weights and get $\Tilde{W}=\{W_1,\cdots,2W_l\}$, such that $\gamma(\Tilde{W})\neq\gamma(W)$, but $\text{argmax}_i f_W(x)=\text{argmax}_i f_{\Tilde{W}}(x)$. 
This reparameterization trick also works for more complicated norm-based and margin-based generalization metric in~\citep{jiang2019fantastic}, or the sharpness-based metrics~\citep{andriushchenko2023modern}.

We numerically investigate whether they are effective measures, by training a ResNet18 on CIFAR-10 dataset, using different hyperparameters as in Table~\ref{tab:hyperparameter}. The results are presented in Figure~\ref{fig:limit}, which indicates that these measures have a low correlation with the generalization gap in the deep learning regime. One could choose other definitions of norms to achieve stronger correlations, but such choices are often problem-specific and require domain expertise, as discussed in~\citep{jiang2019fantastic}.

The definition of margin may also be improved to the input-space margin, \emph{i.e.}, the $\ell_2$ distance of input data $x$ to decision boundary defined by the classifier, which is able to characterize the quality and robustness of the classifier. 
This metric is invariant to reparameterization and therefore more intrinsic to the underlying classifier. 
However, due to the highly nonconvex loss landscape, the input-space margin is NP-complete to compute and even hard to approximate~\citep{katz2017reluplex, weng2018towards}. Therefore, quantitatively analyzing the decision boundary of a neural network and characterizing its implicit bias remains a challenge. 

Our motivation stems from a simple idea: although the margin in the input space is difficult to compute directly, we can instead quantify the number of regions partitioned by the decision boundary. This measure is invariant to model reparameterization and reflects the geometric complexity of the classifier, which are crucial ingredients for a robust and scalable indicator of implicit bias.

\section{Preliminary}

Although region count is a natural measure for the complexity of a predictor, and it depends only on the decision function rather than the model parameterization, its formal definition and computability remains unclear.   In this section, we first provide the definition and low-dimensional approximation of region counts. We then empirically verify that region counts correlate with generalization gap.

\subsection{Definition of Region Counts}

Let $d$ denote the training data dimension and $f: \mathbb{R}^d \rightarrow \{1,2,\ldots,N\}$ denote a neural network for a classification task with $N$ classes. For a subset $U \subset \mathbb{R}^d$, we can define the connectedness of its element as follows:
\begin{definition}[Connectedness]
 We say the data points $x_1, x_2 \in U$ are (path) connected with respect to a neural network $f$ if they satisfy:
    \begin{itemize}
        \item $f(x_1) = f(x_2) = c$,
        \item There exist a continuous mapping $\gamma: [0,1] \rightarrow U$, $\gamma(0) = x_1$, $\gamma(1) = x_2$, and for any $t \in [0,1]$, $f(\gamma(t)) = c$. 
    \end{itemize}
\end{definition}
Then we define the connected region in this subset:
\begin{definition}[Maximally Connected Region]
    We say $V\subset U$ is a maximally connected region in $U\in \mathbb{R}^d$ with respect to a neural network $f$ if it satisfies the following property:
    \begin{itemize}
        \item For any $x,y \in V$, they are connected.
        \item For any $x \in V$, $y \in U \setminus V$, they are not connected.
    \end{itemize}
\end{definition}

 Finally, we formally define the region count as follows:

\begin{definition}[Region Count]
For a subset $U\subseteq \mathbb{R}^d$, we define its region count $R_U$ as the number of maximally connected regions in $U$ with respect to a neural network $f$:
\begin{align*}
    R_U = \text{card}\{V \subset U | V \text{ is a maximally connected region}\} \,,
\end{align*}
where card is the cardinality of a set.
\end{definition}

\subsection{Estimating Region Counts}
Calculating the region count in the original high-dimensional input space can be computationally intractable. We follow the estimation method in~\citep{somepalli2022can}, use a computationally efficient surrogate by calculating the region counts on low dimensional subspace spanned by training data points. 

\begin{definition}[Region Count in $d$-Dimensional Subspace]
    We randomly sample $d+1$ datapoints in the training set $D_{train}$ to generate a convex region in $\mathbb{R}^d$ subspace.
    The $d$-dimensional region count $R_d$ is defined as the expectation of number of maximally connected regions:
    $$R_d = E_{x_1,x_2, \ldots, x_{d+1} \sim D_{train}}[R_{\text{Conv}\{x_1,x_2, \ldots, x_{d+1}\}}]\,,$$
    where $x_1,x_2, \ldots, x_{d+1}$ are sampled from the training dataset, and $\text{Conv}\{x_1,x_2, \ldots, x_{d+1}\}$ is the convex hull formed by these $d+1$ points. 
\end{definition}

 \begin{figure}[h]
  \centering
  \includegraphics[width=0.3\textwidth]{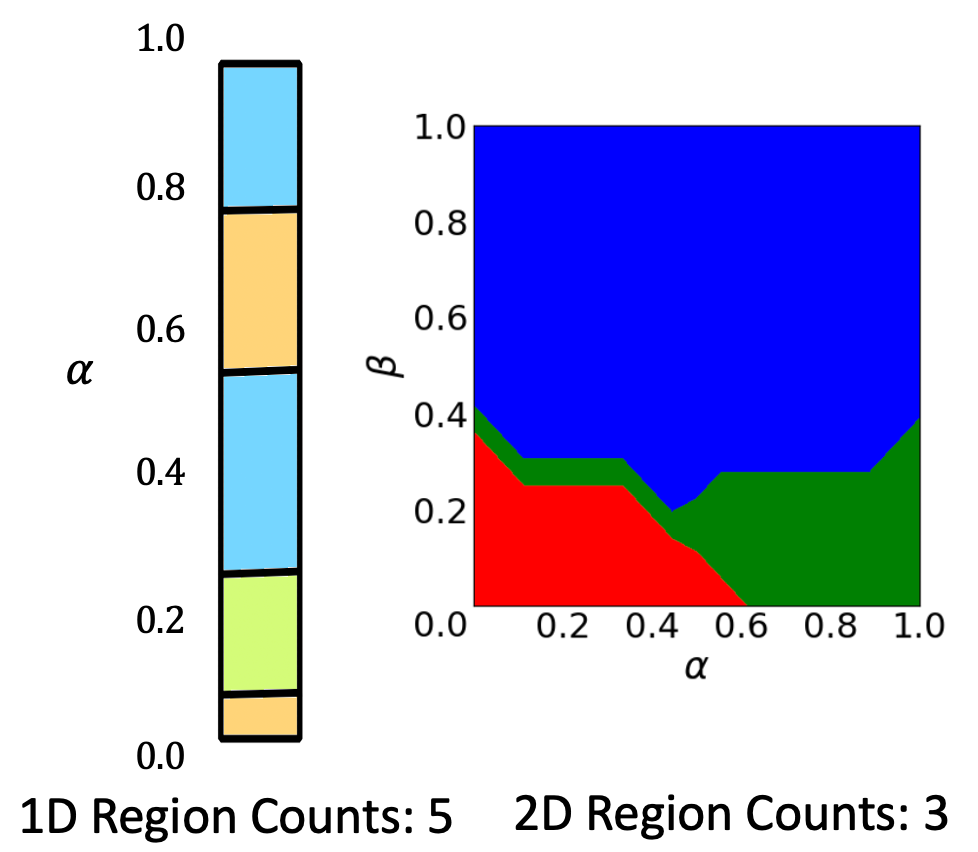}  
  \caption{\textbf{Illustrations of region counts in 1D and 2D subspace.} We use different colors to represent different outputs of the neural network. }
    \label{fig:dataset}
    \vspace{-0.1cm}
\end{figure}

This paper primarily focuses on low dimension spaces, which is illustrated as below.
In practice, we randomly sample training data points for multiple times and take the average region counts.
In Section~\ref{sec:ablation}, we show that the choice of subspace dimension $d$ does not significantly affect the results. The details on how to count the regions and generate the polytopes are provided in Appendix~\ref{app:count}. 

\begin{example}[Region counts in 1D and 2D subspace]
    \label{exp:1d}
    For region count in 1-dimensional subspace, we randomly sample two data points, denoted as $x_1$ and $x_2$, from the training set, and
    calculate the region count on the line segment connecting them:
    $$\{\alpha x_1 + (1-\alpha) x_2 \,, 0 \le \alpha \le 1\}.$$
    For the 2-dimensional case, we randomly sample three data points, $x_1$, $x_2$, and $x_3$, from the training set, and calculate the region count in the convex hull spanned by them:
    \begin{align*}
        \{ \alpha x_1 + \beta x_2 + (1-\alpha-\beta)x_3\,, \alpha \ge 0, \beta \ge 0, \alpha+\beta \le 1 \}\,.
    \end{align*}
We provide an illustration in Figure~\ref{fig:dataset}.
\end{example}

\section{Region Counts Correlate with Generalization Gaps}
\label{sec:def}
In this section, we present our major empirical findings, which reveal a strong correlation between region counts and the generalization error of neural networks\footnote{Code is available at
\href{https://github.com/lijingwei0502/implicit_bias}
{\nolinkurl{https://github.com/lijingwei0502/implicit_bias}}.}. 

We conduct image classification experiments on the CIFAR-10 dataset, using different architectures, including ResNet18~\citep{he2016deep}, EfficientNetB0~\citep{tan2019efficientnet}, and SeNet18~\citep{hu2018squeeze}. 
Results on other architectures are deferred to ablation studies. 
We vary the hyperparameters for training, such as learning rate, batch size and weight decay coefficient, whose numbers are reported in  Table~\ref{tab:hyperparameter}. The region count is calculated using randomly generated 1D hyperplanes, as described in Example~\ref{exp:1d}. We run each experiment 100 times and report the average number.

We plot the region count and generalization gap of different setups in Figure~\ref{fig:start}, and calculate the correlation between them. For each network architecture, we observe a strong correlation as high as 0.98. The overall correlation for all the three networks still reaches 0.93. This reveals a remarkably high correlation between region counts and generalization gap.


\begin{table}[h]
\small
\centering
\captionsetup{skip=5pt}
\caption{\textbf{The hyperparameters for experiments.} We vary the learning rate, batch size, and weight decay for training a neural network, to modulate the model's generalization ability.}
\label{tab:hyperparameter}

\begin{tabular}{cc}
\toprule  
 Hyperparameters & Value \\
\midrule
 Learning rate  & $0.1$, $0.01$, $0.001$ \\ 
  Batch size & $256$, $512$, $1024$ \\ 
 Weight decay & \num{e-5}, \num{e-6}, \num{e-7} \\
\bottomrule
\end{tabular}

\end{table}

\begin{figure}[ht]
  \centering
    \includegraphics[scale = 0.37]{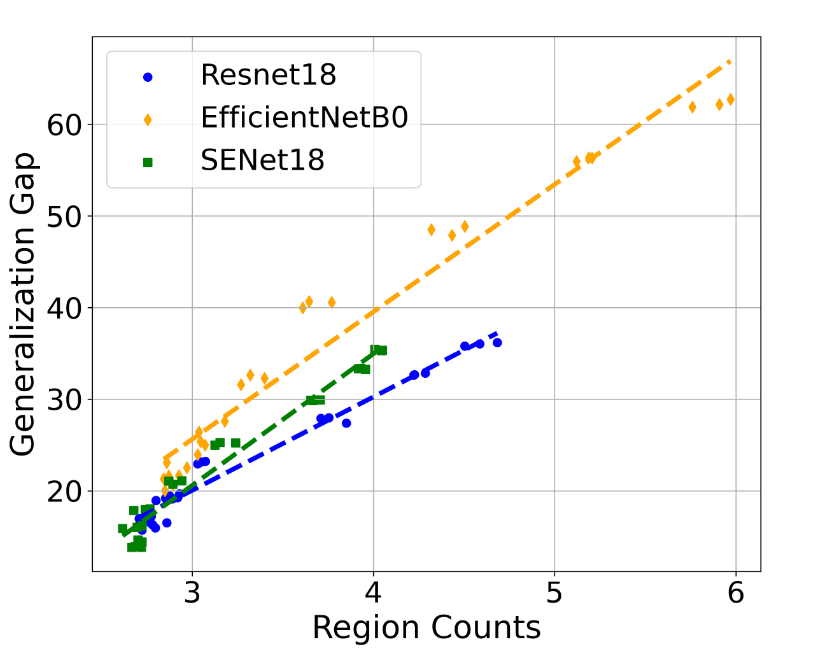}
    \caption{\textbf{Strong correlation between region counts and generalization gap.} We conduct experiments using three neural networks on the CIFAR-10 dataset, with various hyperparameters. There is a strong correlation between region counts and the generalization gap, with a correlation coefficient of 0.98 for each network and 0.93 across all networks. }
    \label{fig:start}
\end{figure}





\section{Region Counts Quantify Implicit Bias}
In this section, we further investigate the implicit bias of neural networks via region counts. We show both empirically and theoretically that neural networks trained with appropriate hyperparameters tend to have smaller region counts, thus achieving better generalization performance. 

\subsection{The Bias from Training Hyperparameters}
Training neural networks requires careful selection of many hyperparameters, such as learning rates, batch sizes, optimizers, epochs and so on. Here, we primarily focus on learning rate and batch size, and study their impact on the region count. 

\paragraph{Learning Rates.} 
We provide the relationship between the learning rate and the region count in Figure \ref{fig:lr}. Our findings indicate that a larger learning rate tends to simplify the decision boundary and results in a smaller region count in the hyperplane. This accords well with real practices, where large learning rates of $0.1$ or $0.01$ are often favored for better generalization.

\paragraph{Batch Sizes.} Similarly, the training batch size can impact the number of regions. As shown in Figure \ref{fig:lr}, smaller batch sizes lead to a model with fewer regions. This result reveals the advantage of small-batch training, which leads to better generalization accuracy. 
\begin{figure*}[t]
  \centering
    \includegraphics[scale = 0.3]{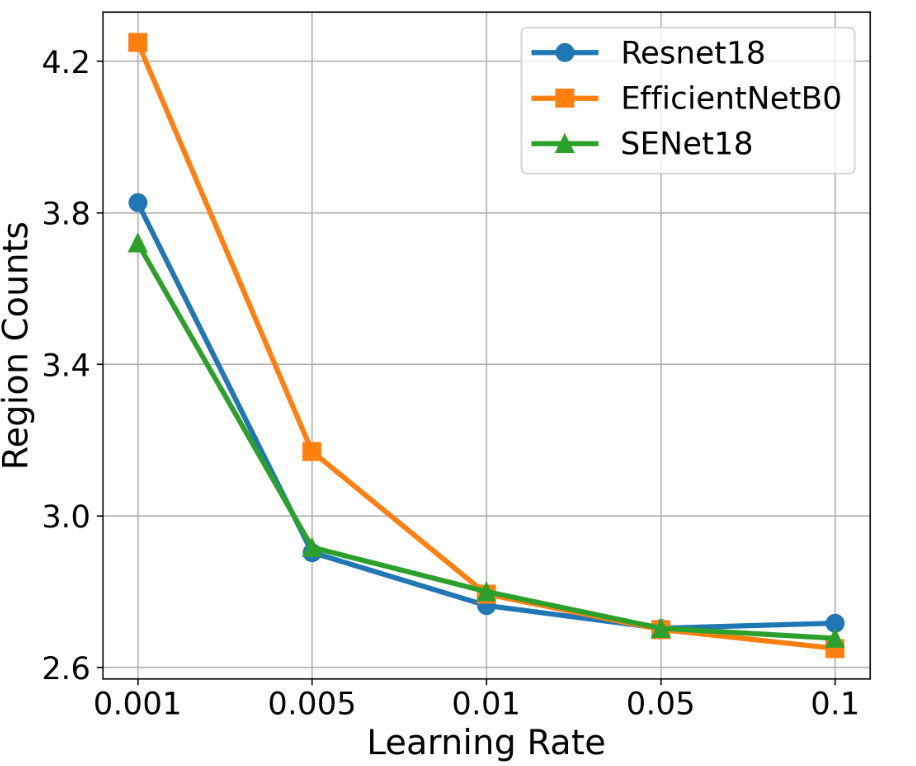}\hspace{1.8cm}
    \includegraphics[scale = 0.3]{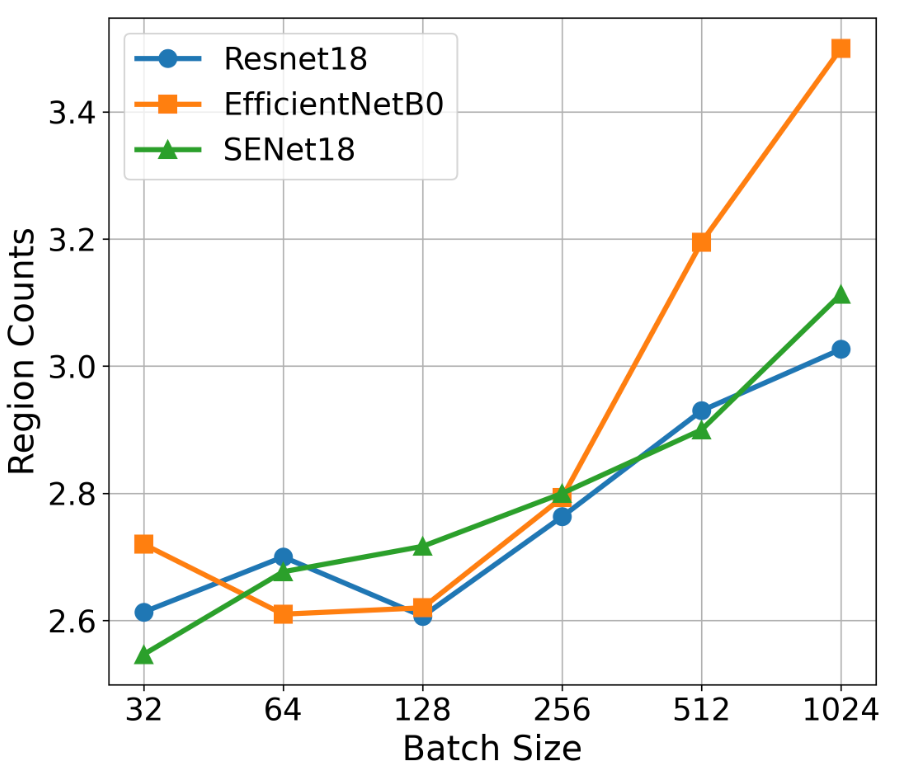}
    \caption{\textbf{Large learning rate and small batch size reduce region counts.} We train three networks on the CIFAR-10 dataset, varying the batch sizes and learning rates. Our findings reveal that a smaller batch size or a higher learning rate results in smaller region counts, allowing the network to learn a simpler decision boundary and generalize better.}
    \label{fig:lr}
\end{figure*}

Previous studies~\citep{keskar2016large, jastrzkebski2017three, hoffer2017train, novak2018sensitivity, lewkowycz2020large} find that certain hyperparameters, such as a large learning rate and a small batch size, can improve the generalization of the neural network. Our observations provide a possible explanation: these good hyperparameter choices lead to a reduced region count. Such simplicity bias can decrease the generalization gap of neural networks.

\subsection{Theoretical Explanations}
Next, we present a theoretical analysis to explain why some hyperparameter choices, such as large learning rate, can lead to small region counts. 

Consider a two layer ReLU neural network $f_W(x)=\sum_{i=1}^p a_i \sigma(w_i^\top x)$. The second layer weights $a_i$ are initialized uniformly from $\{1, -1\}$ and fixed throughout training. Let $\mathcal{D}=\{(x_i, y_i)\}_{1\le i\le N}$ denote the training set. 
Consider training $f_W$ on $\mathcal{D}$ using gradient descent~(GD) with learning rate $\eta$. We choose the quadratic loss $l(W,x,y)=\frac{1}{2}(y-f_W(x))^2$ and denote $L(W)=\frac{1}{N}\sum_{i=1}^N l(W,x_i,y_i)$. 
Denote the GD trajectory as $\{W_t\}_{t\ge 0}$.
For two input data $x_a, x_b$, Let $R(x_a, x_b, W)$ denote the region count on the line segment connecting them, and $N(x_a, W)$ denote the number of activated neurons with input $x_a$, i.e., the number of $i$ such that $w_i^\top x_a>0$.  

We make the following assumption on the data distribution. 
\begin{assumption}\label{asm:data}
    The training dataset $\mathcal{D}=\{(x_i, y_i)\}_{1\le i\le N}$ satisfies the following two properties:
    \begin{enumerate}
        \item $\|x_i\|\ge r$ for all $1\le i\le N$,
        \item With probability one, any $W\in \{W_t\}_{t\ge 0}$ satisfies $w_i^\top x_j\neq 0$ for all $1\le i \le q, 1\le j\le N$, where the randomness comes from weight initialization.
    \end{enumerate}
\end{assumption}

The validity of Assumption~\ref{asm:data} comes from the fact that the bifurcation zone~\citep{bertoin2021numerical} of ReLU neural networks, which contains its non-differentiable points, has Lebesgue measure zero~\citep{bolte2020mathematical, bolte2021conservative, bianchi2022convergence}. Therefore, if the distribution of weights are absolutely continuous with respect to the Lebesgue measure, the bifurcation zone can be avoided with probability one. We conjecture that it can be proved rigorously, but leave it as an assumption since the proof diverges from the main content in this paper. 

The next assumption characterizes the sharpness along the training trajectory. This is actually from  the well-known edge of stability phenomenon~\citep{cohen2020gradient, damian2022self, arora2022understanding, ahn2024learning}, which  states that the sharpness of neural networks, characterized by the $\ell_2$ norm of the Hessian matrix, hovers around $\frac{2}{\eta}$.

\begin{assumption}[Edge of Stability]\label{asm:eos}
    There exist a $T\in\mathbb{N}$, such that for $t\ge T$, with we have
    \begin{align*}
        \lambda_{\max}(\nabla_W^2 L(W_t))=\Theta\left(\frac{1}{\eta}\right),
    \end{align*}
    where $\lambda_{\max}$ denotes the maximum eigenvalue of a matrix. 
\end{assumption}

We are now ready to present the main theorem, which establishes a relationship between region count and learning rate. 
\begin{restatable}{theorem}{two}
Under Assumption~\ref{asm:data} and~\ref{asm:eos}, we have that for neural net weights $W_t$ at training step $t\ge T$, with probability one, the average region count $R(X, X', W_t)$ for random training data point $X, X'$ can be bounded as:
\begin{align*}
    &\E_{X, X'}[R(X, X', W_t)] \\
    =& \frac{1}{N^2}\sum_{i=1}^N \sum_{j=1}^N R(x_i, x_j, W_t)\le O\left(\frac{N}{r^2\eta}\right). 
\end{align*}
\label{thm:2}
\end{restatable}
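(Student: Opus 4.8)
The plan is to bound the region count on any segment by the number of neurons active at its endpoints, and then to control this activation count through the loss Hessian using the edge-of-stability assumption. All statements below hold with probability one, since Assumption~\ref{asm:data}.2 does.

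\textbf{Step 1: Segment geometry.} For a pair $x_a,x_b$ write $x(\alpha)=(1-\alpha)x_a+\alpha x_b$. Each pre-activation $w_i^\top x(\alpha)$ is affine in $\alpha$, so it vanishes at most once and changes sign precisely when $w_i^\top x_a$ and $w_i^\top x_b$ have opposite signs; by Assumption~\ref{asm:data}.2 neither endpoint value is zero. Hence the number of breakpoints of the piecewise-linear map $\alpha\mapsto f_{W}(x(\alpha))$ is at most the number of neurons whose activation flips, which is at most $N(x_a,W)+N(x_b,W)$. A piecewise-linear function with $m$ linear pieces has at most $m$ sign changes (one zero per piece), and the region count equals the number of predicted-label changes plus one. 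With $m\le N(x_a,W)+N(x_b,W)+1$ this gives $R(x_a,x_b,W)\le N(x_a,W)+N(x_b,W)+O(1)$.

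\textbf{Step 2: From activation counts to sharpness.} Let $g_j=\nabla_W f_{W}(x_j)$. Because $w_i^\top x_j\neq 0$ for all $i$ (Assumption~\ref{asm:data}.2), each $x_j$ lies in the interior of a linear region, so $f_W$ is locally linear in $W$ and the second-order term of the loss Hessian vanishes, yielding the Gauss--Newton form $\nabla_W^2 L(W)=\frac{1}{N}\sum_{j=1}^N g_j g_j^\top$. Since $a_i^2=1$ and $\sigma'(w_i^\top x_j)=\mathbb{1}[w_i^\top x_j>0]$, one computes $\|g_j\|^2=N(x_j,W)\,\|x_j\|^2$. As $\sum_j g_j g_j^\top \succeq g_k g_k^\top$ for every fixed $k$, the top eigenvalue obeys $\lambda_{\max}(\nabla_W^2 L(W))\ge \frac{1}{N}\max_k \|g_k\|^2 \ge \frac{r^2}{N}\max_k N(x_k,W)$, using $\|x_k\|\ge r$ from Assumption~\ref{asm:data}.1.

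\textbf{Step 3: Invoke edge of stability and average.} For $t\ge T$, Assumption~\ref{asm:eos} gives $\lambda_{\max}(\nabla_W^2 L(W_t))=\Theta(1/\eta)$, hence at most $C/\eta$ for a constant $C$. Combining with Step~2 yields $\max_k N(x_k,W_t)\le CN/(r^2\eta)$. Substituting into the bound of Step~1, $R(x_i,x_j,W_t)\le 2\max_k N(x_k,W_t)+O(1)=O(N/(r^2\eta))$ uniformly over all pairs $(i,j)$, and averaging over $i,j$ preserves this bound, which is exactly the claim of Theorem~\ref{thm:2}.

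I expect the main obstacle to be Step~2: rigorously justifying the Gauss--Newton reduction of the Hessian for the (only a.e.-differentiable) ReLU loss, which is precisely what Assumption~\ref{asm:data}.2 buys, and extracting a clean lower bound on $\lambda_{\max}$ in terms of $\max_k N(x_k,W)$ rather than, say, the trace. The translation in Step~1 from ``breakpoints'' to ``region count'' is also mildly delicate, since a label change requires an actual sign crossing of $f_W$; I would handle this via the elementary fact that each of the at most $N(x_a,W)+N(x_b,W)+1$ linear pieces contributes at most one zero.
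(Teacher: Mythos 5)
Your proposal is correct and follows essentially the same three-step route as the paper: bound $R(x_a,x_b,W)\le N(x_a,W)+N(x_b,W)+2$, lower-bound $\lambda_{\max}(\nabla_W^2 L)$ by activation counts via the Gauss--Newton form of the Hessian, and invoke the edge-of-stability assumption. The only (minor) differences are internal: you prove the region-count lemma by counting breakpoints of the piecewise-linear restriction $\alpha\mapsto f_W(x(\alpha))$ instead of the paper's alternating-triple contradiction argument, and in Step~2 you use $\max_k\|g_k\|^2$ where the paper uses the average $\frac{1}{N}\sum_k\|g_k\|^2$, which gives you a slightly stronger uniform per-pair bound that trivially implies the averaged statement.
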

The theorem demonstrates that with a larger learning rate, gradient descent has the implicit bias to yield solutions with smaller region counts. This aligns well with the previous observations.
\begin{table*}[ht]
\small
\caption{\textbf{Experimental consistency across networks, datasets, and counting methods.} We conduct experiments on various types of networks across multiple datasets. We also alter the method of calculating the region counts. The results of the correlation indicate that our findings are consistent across different setups.}
\centering
\renewcommand{\arraystretch}{1.35}
\begin{tabular}{ccccccccc}
\hline
\multirow{2}{*}{Network} & \multicolumn{3}{c}{Dataset} & & \multicolumn{4}{c}{Counting Dimension} \\ \cline{2-4} \cline{6-9}
   & \emph{CIFAR-10} & \emph{CIFAR-100} & \emph{ImageNet} & & 2 & 3 & 4 & 5 \\ \hline
ResNet18 & 0.98 & 0.96 & 0.91 & &  0.96 & 0.97 & 0.97 & 0.96 \\
ResNet34 & 0.98 & 0.98 & 0.82 & &  0.98 & 0.98 & 0.98 & 0.99 \\
VGG19 & 0.94 & 0.85 & 0.78 &&0.88 & 0.86& 0.84& 0.86 \\
MobileNet & 0.95 & 0.95 & 0.92 & & 0.99& 0.99 &0.99 & 0.99\\
SENet18 & 0.98 & 0.85 & 0.80 & &0.97 &0.97 & 0.97& 0.93 \\
ShuffleNetV2 & 0.95 & 0.92 & 0.92 && 0.94& 0.95 & 0.95&0.93 \\
EfficientNetB0 & 0.98 & 0.84 & 0.93 && 0.99 & 0.99& 0.99& 0.98 \\
RegNetX\_200MF & 0.98 & 0.87 & 0.97 &  &0.98 & 0.99&0.99 &0.98 \\
SimpleDLA & 0.98 & 0.94 & 0.84 && 0.99 & 0.99& 0.98& 0.99 \\
\hline
\end{tabular}
\label{tab:ablation}
\vspace{-0.2cm}
\end{table*}
We defer the proof of this theorem to Appendix~\ref{app:proof}, and sketch the proof as follows. 
The proof begins with bounding the region count using the activation pattern of ReLU neurons, as stated in the following lemma. 
\begin{restatable}{lemma}{lemregion}
The region counts between a pair of data points is upper-bounded by the number of active neurons.
For two inputs $x_a, x_b$, we have
    $$R(x_a, x_b, W)\le N(x_a, W)+N(x_b, W)+2.$$
\end{restatable}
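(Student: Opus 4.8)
The plan is to bound the region count along the segment $\gamma(\alpha) = \alpha x_a + (1-\alpha) x_b$ for $\alpha \in [0,1]$ by counting how many times the predicted label can change as $\alpha$ sweeps from $0$ to $1$. The key observation is that the network output $f_W(\gamma(\alpha)) = \sum_{i=1}^p a_i \sigma(w_i^\top \gamma(\alpha))$ is a continuous, piecewise-linear function of the scalar $\alpha$, because each term $a_i \sigma(w_i^\top \gamma(\alpha))$ is piecewise linear in $\alpha$ with at most one breakpoint, namely the value $\alpha_i$ where $w_i^\top \gamma(\alpha_i) = 0$. First I would argue that the label (i.e.\ the sign of $f_W$, in the binary case, or more generally the region assignment) can only change at a point where $f_W(\gamma(\alpha))$ crosses a decision threshold, and such crossings are confined to the pieces delimited by these breakpoints. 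Thus the number of maximally connected regions is at most one more than the number of \emph{sign changes} of $f_W$ along the segment.

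Next I would connect the number of potential label changes to the activation breakpoints. On each linear piece between consecutive breakpoints $\alpha_i$, the function $f_W \circ \gamma$ is affine, so it can cross any fixed threshold at most once; hence the number of sign changes is at most one plus the number of breakpoints that actually lie in $(0,1)$. The crucial counting step is to observe that a neuron $i$ contributes a breakpoint in the interior of the segment only if it switches activation state as $\alpha$ moves from $x_b$ (at $\alpha=0$) to $x_a$ (at $\alpha=1$); this requires the sign of $w_i^\top x$ to differ at the two endpoints, i.e.\ the neuron is active at one endpoint and inactive at the other. The number of such neurons is at most the number active at $x_a$ plus the number active at $x_b$, namely $N(x_a,W) + N(x_b,W)$, since a neuron that flips must be counted in the active set of at least one endpoint.

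Assembling these pieces, the region count $R(x_a, x_b, W)$ is bounded by the number of interior breakpoints plus a small additive constant accounting for the two endpoints of the segment (where a region may begin or end): this yields $R(x_a, x_b, W) \le N(x_a, W) + N(x_b, W) + 2$. Assumption~\ref{asm:data} guarantees (with probability one) that no endpoint sits exactly on an activation boundary, so the breakpoints are distinct interior points and the piecewise-linear structure is nondegenerate, making the counting clean.

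The main obstacle I anticipate is being careful about the multiclass setting, where the label is $\operatorname{argmax}$ over $N$ output coordinates rather than a single sign: there the decision boundary is where two coordinates tie, and I would need each coordinate $f_W(\cdot)_c$ to be affine on each piece so that pairwise differences are affine and thus cross zero at most once per piece. A second subtlety is correctly bookkeeping the additive constant of $2$ and ensuring no double-counting of a neuron whose breakpoint falls outside $(0,1)$ or of a neuron active at both endpoints (which contributes no interior breakpoint); the bound is generous enough to absorb these edge cases, so the argument should go through without delicate optimization of the constant.
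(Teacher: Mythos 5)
Your proof is correct, but it takes a genuinely different route from the paper's. The paper argues in the reverse direction: it starts from $R$ points with alternating predictions along the segment, shows by contradiction for each consecutive triple $\tilde{x}_i,\tilde{x}_{i+1},\tilde{x}_{i+2}$ that some neuron $n(i)$ must flip activation sign across the triple (otherwise $f_W$ would be exactly linear there and the middle value a convex combination of the outer two, contradicting the sign alternation), and then does careful bookkeeping to show the witnesses are distinct --- choosing $n(i)$ so that $a_{n(i)}w_{n(i)}^\top \tilde{x}_{i+2}$ matches the sign of $f_W(\tilde{x}_{i+2})$ to rule out $n(i)=n(i+1)$, and using that a linear function changes sign at most once to rule out $n(i)=n(j)$ for $|i-j|\ge 2$ --- concluding that $R-2$ distinct neurons each flip between $x_a$ and $x_b$ and hence are active at exactly one endpoint. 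You instead go forward: $f_W\circ\gamma$ is piecewise affine with at most one breakpoint per neuron; a breakpoint lies in $(0,1)$ only if the neuron's activation differs at the two endpoints (an affine function with the same strict sign at both endpoints keeps that sign on the whole interval), so there are at most $N(x_a,W)+N(x_b,W)$ interior breakpoints, hence at most that many plus one affine pieces, each crossing zero at most once, giving at most $N(x_a,W)+N(x_b,W)+2$ regions. Your direction is the more standard linear-pieces count and cleanly sidesteps the paper's somewhat delicate injectivity argument; the paper's construction extracts the same combinatorial content (one distinct flipping neuron per label change) read in reverse, which the breakpoint count gives you for free. Two of your flagged worries dissolve on inspection: the multiclass argmax issue is moot because the model in this theorem is scalar-output with prediction given by the sign of $f_W$, exactly as in the paper's own proof; and Assumption~\ref{asm:data} (part 2) supplies precisely the nondegeneracy $w_i^\top x_a\neq 0$, $w_i^\top x_b\neq 0$ that makes ``breakpoint in the open interval'' equivalent to ``active at exactly one endpoint,'' so your counting is airtight.
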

Then we prove that the activation pattern gives a bound on the smoothness of the training loss. 

\begin{restatable}{lemma}{lemsmooth}
    The sharpness of a neural network is lower-bounded by the number of active neurons: $$\lambda_{\max}\left(\nabla^2_W L(W)\right) \ge \frac{r^2}{N^2} \sum_{i=1}^N N(x_i, W).$$
\end{restatable}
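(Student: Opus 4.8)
The plan is to compute the loss Hessian exactly, exploiting the fact that a ReLU network is locally linear in $W$ away from its bifurcation zone, and then to lower-bound its top eigenvalue by a trace-over-rank argument. Writing $g_i := \nabla_W f_W(x_i)$ for the stacked parameter gradient at the $i$-th training point, the key structural observation is that $\nabla_W^2 L(W) = \frac{1}{N}\sum_{i=1}^N g_i g_i^\top$, i.e., the Hessian reduces to the Gauss--Newton (outer-product-of-gradients) term. Once this identity is in hand, the lemma follows from two elementary facts: the trace of this matrix equals $\frac{1}{N}\sum_i \|x_i\|^2 N(x_i, W)$, and a positive semidefinite matrix that is a sum of $N$ rank-one terms has $\lambda_{\max}$ at least its trace divided by $N$.

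First I would establish the Hessian identity. By Assumption~\ref{asm:data}(2), with probability one the weights stay off the bifurcation zone, so every inner product $w_j^\top x_i$ is nonzero; hence in a neighborhood of $W_t$ the activation indicators $\mathbf{1}[w_j^\top x_i > 0]$ are constant and $f_W(x_i) = \sum_{j:\, w_j^\top x_i > 0} a_j\, w_j^\top x_i$ is an \emph{affine} function of $W$. Consequently $\nabla_W^2 f_W(x_i) = 0$ locally, and differentiating the quadratic loss twice gives
\begin{align*}
\nabla_W^2 l(W, x_i, y_i) &= \nabla_W f_W(x_i)\,\nabla_W f_W(x_i)^\top \\
&\quad - (y_i - f_W(x_i))\,\nabla_W^2 f_W(x_i) = g_i g_i^\top,
\end{align*}
where the second (residual) term vanishes. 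Averaging over the training set yields $\nabla_W^2 L(W) = \frac{1}{N}\sum_i g_i g_i^\top$, which is positive semidefinite with rank at most $N$.

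Next I would compute the trace. Since $\nabla_{w_j} f_W(x_i) = a_j \mathbf{1}[w_j^\top x_i>0]\, x_i$ and $a_j^2 = 1$, we have $\|g_i\|^2 = \sum_{j=1}^p a_j^2 \mathbf{1}[w_j^\top x_i>0]\,\|x_i\|^2 = \|x_i\|^2\, N(x_i, W)$. Therefore
\[
\mathrm{tr}\big(\nabla_W^2 L(W)\big) = \frac{1}{N}\sum_{i=1}^N \|x_i\|^2\, N(x_i, W) \ge \frac{r^2}{N}\sum_{i=1}^N N(x_i, W),
\]
using $\|x_i\| \ge r$ from Assumption~\ref{asm:data}(1). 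Finally, for a positive semidefinite matrix of rank at most $N$ the largest eigenvalue dominates the average of the nonzero eigenvalues, so $\lambda_{\max} \ge \mathrm{tr}/\mathrm{rank} \ge \mathrm{tr}/N$; combining this with the trace bound gives $\lambda_{\max}(\nabla_W^2 L(W)) \ge \frac{r^2}{N^2}\sum_i N(x_i, W)$, as claimed.

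The main obstacle is the rigorous justification that the residual term $(y_i - f_W(x_i))\nabla_W^2 f_W(x_i)$ truly drops out, so that the Hessian is purely the Gauss--Newton term. This is exactly where Assumption~\ref{asm:data}(2) is needed: on the measure-zero bifurcation set the ReLU kinks would contribute singular second derivatives, and one must argue that the trajectory avoids this set with probability one so that $f_W$ is genuinely locally affine at each $x_i$. The remaining steps, the trace computation and the trace/rank eigenvalue inequality, are routine.
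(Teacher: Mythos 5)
Your proof is correct and takes essentially the same approach as the paper's: both reduce the per-sample Hessian to the rank-one Gauss--Newton term $g_i g_i^\top$ (the paper writes it as the block matrix with entries $v_jv_k^\top$, $v_j = a_j\sigma'(w_j^\top x)x$, so that its top eigenvalue equals $\|x_i\|^2 N(x_i,W)\ge r^2 N(x_i,W)$), and both lose the same factor of $N$ when aggregating over the training set. The only cosmetic difference is the final step---you invoke $\lambda_{\max}\ge \mathrm{tr}/\mathrm{rank}$ with rank at most $N$, whereas the paper lower-bounds $\lambda_{\max}$ of the PSD sum by the average of the per-sample top eigenvalues---and your explicit justification via Assumption~\ref{asm:data} that the residual term $(y_i-f_W(x_i))\nabla^2_W f_W(x_i)$ vanishes is actually spelled out more carefully than in the paper, which uses this fact implicitly.
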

Note that this lemma brings in an additional $N$ in the denominator, which leads to a $N$-dependent bound in Theorem~\ref{thm:2}. The $N$
 dependency is actually tight in worst case analysis, by considering $N$ points on a line with alternating labels. We conjecture that the $N$-dependency can be optimized under further structural assumptions on the data distribution, and leave it for further investigations. Equipped with these two lemmas, Theorem~\ref{thm:2} is a consequence of the sharpness condition in Assumption~\ref{asm:eos}.

\section{Ablation Studies}
\label{sec:ablation}

This section presents an ablation study to validate the robustness and consistency of our findings. We systematically vary key aspects of our experimental setup, including the network architecture, dataset, optimizer, and the method of computing the plane, and  evaluate their impact on our main results of
the correlation between region count and the generalization gap.

\paragraph{More Architectures, Datasets and Hyperplane Dimensions.}
We first examine the influence of neural network architectures and datasets on our results. We provide additional results on various neural network architectures such as ResNet34~\citep{he2016deep}, VGG19~\citep{simonyan2014very}, MobileNetV2~\citep{sandler2018mobilenetv2}, ShuffleNetV2~\citep{ma2018shufflenet}, RegNet200MF~\citep{radosavovic2020designing}, and SimpleDLA~\citep{yu2018deep}. We also use various datasets such as CIFAR-100~\citep{krizhevsky2009learning} and ImageNet~\citep{deng2009imagenet}. Region counts and generalization gaps are evaluated across various learning rates, batch sizes, and weight decay parameters as listed in Table~\ref{tab:hyperparameter}. 

We also explore the effects of different methods for generating the hyperplane in the input space. In our previous experiments, we generate the 1-dimensional plane using random pairs of samples from the training set and calculate the region count on them. Here we explore region counts in higher dimensional planes, that are spanned by 2 to 5 data randomly-selected points from the training set, using the CIFAR-10 dataset.

The experiment results of the correlation are presented in Table~\ref{tab:ablation}.  We also provide correlation plots for each network in Appendix~\ref{app:addtional}. We observe that the strong correlation between region count and the generalization gap remains consistent in various setups.
The consistency indicates that our findings reveal a fundamental characteristic of non-linear neural networks.

We also provide evaluations by varying the optimizer and hyperplane generation algorithms. The results are deferred to Appendix~\ref{app:ablation}.

\paragraph{Data Augmentations.}

Mixup~\citep{zhang2017mixup} is a data augmentation technique that creates training samples by linearly interpolating pairs of input data and their corresponding labels. We train a ResNet-18 model using mixup, with other hyperparameters in Table~\ref{tab:hyperparameter}. The plot in Figure \ref{fig:mixup} illustrates that Mixup induces smoother decision boundaries with smaller region count and has a better generalization performance.   
Random crop and random horizontal flip is another way to enhance the diversity of the training dataset.
We apply random crop of size 32$\times$32 with padding 4 and random horizontal flip with a probability of 0.5 as data augmentations. 
As depicted in Figure \ref{fig:crop and flip}, we observe that compared with mixup, random crop and random flip result in a more evident vertical shift in the performance curve. 

Although mixup and random crop affect region count differently, the correlation between region count and generalization remains high after applying these data augmentation techniques. Detailed correlation results can be found in Appendix~\ref{app:ablation}. We will explore the underlying principles of this experimental phenomenon as a future direction.

\begin{figure}[h]
  \centering
    \includegraphics[scale = 0.34]{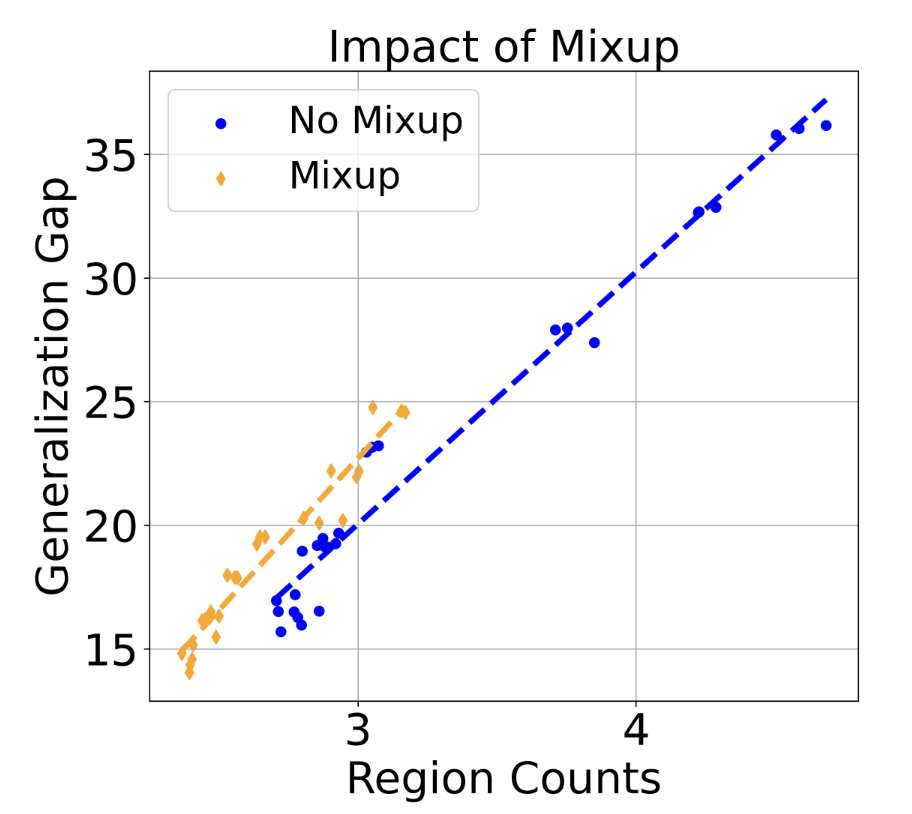}
    \caption{\textbf{The impact of mixup.} This figure shows that mixup improves the model's generalization ability and reduces the number of regions in the hyperplane.}
    \label{fig:mixup}
\end{figure}

\begin{figure}[h]
  \centering
    \includegraphics[scale = 0.399]{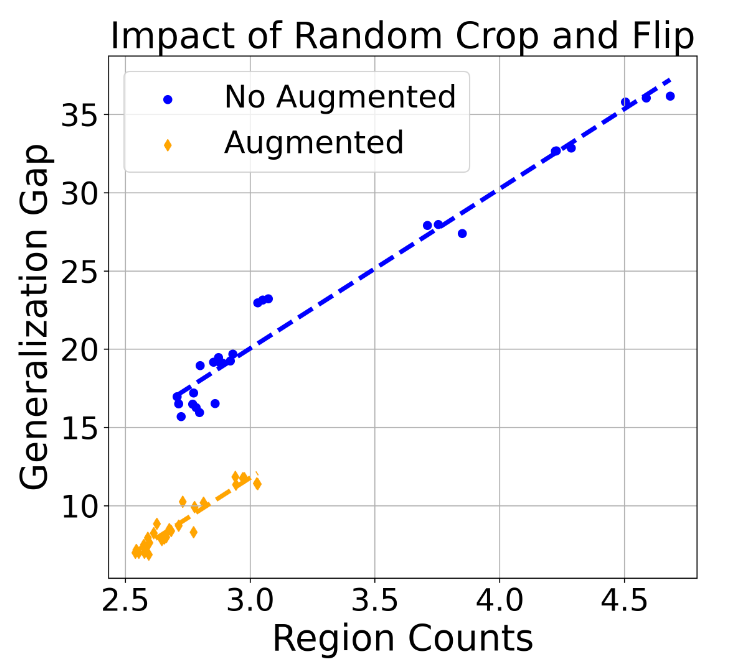}
    \caption{\textbf{The impact of random crop and random flip.} Unlike mixup, data augmentation results in a vertical shift in the performance curve, accompanied by a decrease in the number of regions and a more significant enhancement in test accuracy.}
    \label{fig:crop and flip}
\end{figure}

\section{Conclusions and Future Directions}
This paper introduces a novel approach to characterizing the implicit bias of neural networks. We study the region counts in the input space and identify its strong correlation with generalization gap in non-linear neural networks. These findings are consistent across various network architectures, datasets, optimizers. Our analysis offers a new perspective to quantify and understand the generalization property and implicit bias of neural networks. 

Our paper suggests several promising directions for future research. 
Firstly, our analyses of why large learning rate induces small region counts mainly focus on a simplified setup. The analyses for more general settings remain open.
Secondly, extending the definition of region count to non-classification tasks, such as natural language generation, would be a worthwhile direction.
Lastly, region count can be leveraged to design new architectures or regularization, that can potentially improve the generalization performance of neural networks.

\section*{Impact Statement}

This paper presents work whose goal is to advance the field of Machine Learning. There are many potential societal consequences of our work, none which we feel must be specifically highlighted here.


\bibliography{example_paper}
\bibliographystyle{icml2025}

\newpage
\appendix
\onecolumn
\section{Experiment details}
\label{app:A}
In this section, we provide the detailed experiment settings.

\subsection{Details on Architectures and Datasets}
We conduct experiments on different neural network architectures, including ResNet18 and ResNet34~\citep{he2016deep}, EfficientNetB0~\citep{tan2019efficientnet}, SENet18~\citep{hu2018squeeze}, VGG19~\citep{simonyan2014very}, MobileNetV2~\citep{sandler2018mobilenetv2}, ShuffleNetV2~\citep{ma2018shufflenet}, RegNet200MF~\citep{radosavovic2020designing}, SimpleDLA~\citep{yu2018deep}. We conduct all experiments using NVIDIA RTX 6000 graphics card.

We use CIFAR-10/100~\citep{krizhevsky2009learning} and Imagenet-1k~\citep{deng2009imagenet} as datasets.
For CIFAR-10 and CIFAR-100 dataset, each network was trained for $200$ epochs using the Stochastic Gradient Descent (SGD) algorithm with cosine learning rate schedule. We choose 27 combinations of hyperparameters in Table~\ref{tab:hyperparameter}, and for each hyperparameter we use 3 random seeds and report the average metrics. 
For the Imagenet-1k dataset, each network was trained for $50$ epochs with random data crop and random flip. We use the same optimizer and 27 combinations of hyperparameters as in CIFAR-10 and CIFAR-100 experiments. It is worth noting that we make minor adjustments on hyperparameters for certain networks to ensure stable training. For example, in the case of VGG19, the training is unable to converge when the learning rate is set to 0.1; therefore, we adjust it to 0.05.




\subsection{Correlation Plots}
\label{app:addtional}
We show the correlation plot of average regions and test accuracy in Figure~\ref{fig:all1}. The figure consists of results from training different networks on CIFAR-10 dataset with SGD for 200 epochs, using the hyperparameters specified in Table~\ref{tab:hyperparameter}. 
The results show that different networks have different number of regions, ranging from 2 to 20. However, the correlation of test accuracy and average number of regions are consistently high in all the networks. 
\begin{figure}[h]
    \centering
    \begin{minipage}[b]{0.312\linewidth}
        \includegraphics[width=\linewidth]{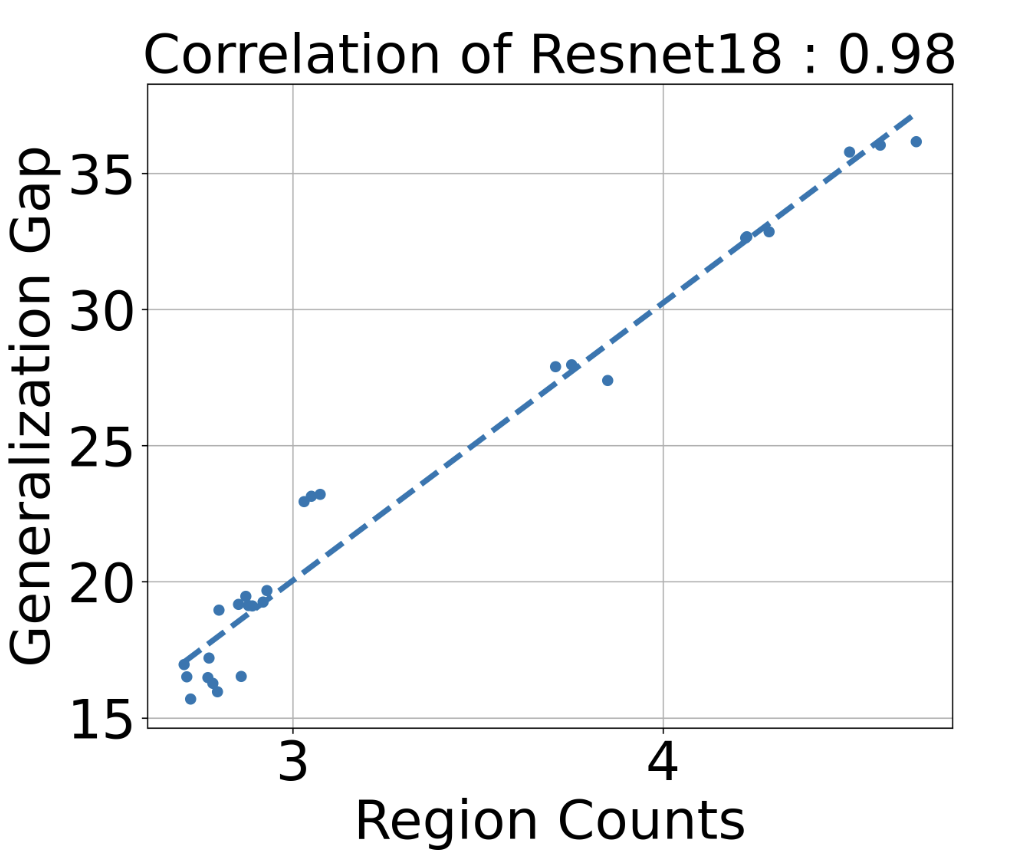}
        \centering
    \end{minipage}
    \hfill
    \begin{minipage}[b]{0.317\linewidth}
        \includegraphics[width=\linewidth]{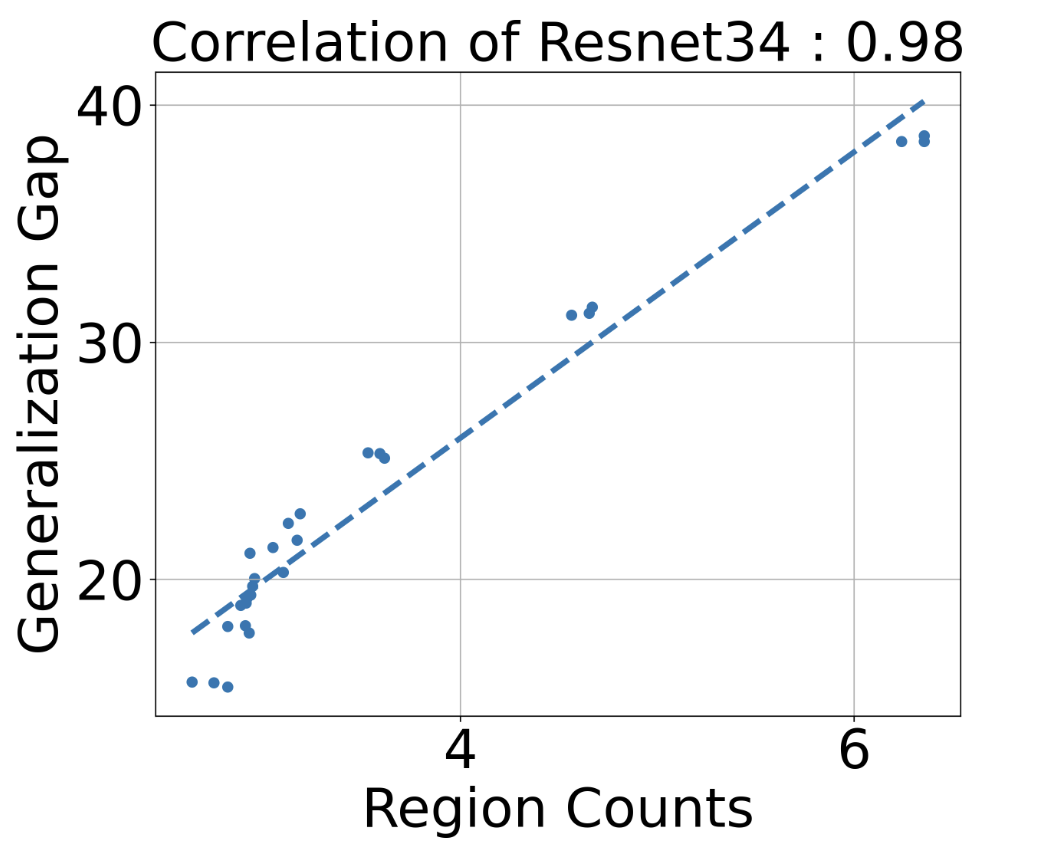}
        \centering
    \end{minipage}
    \hfill
    \begin{minipage}[b]{0.32\linewidth}
        \includegraphics[width=\linewidth]{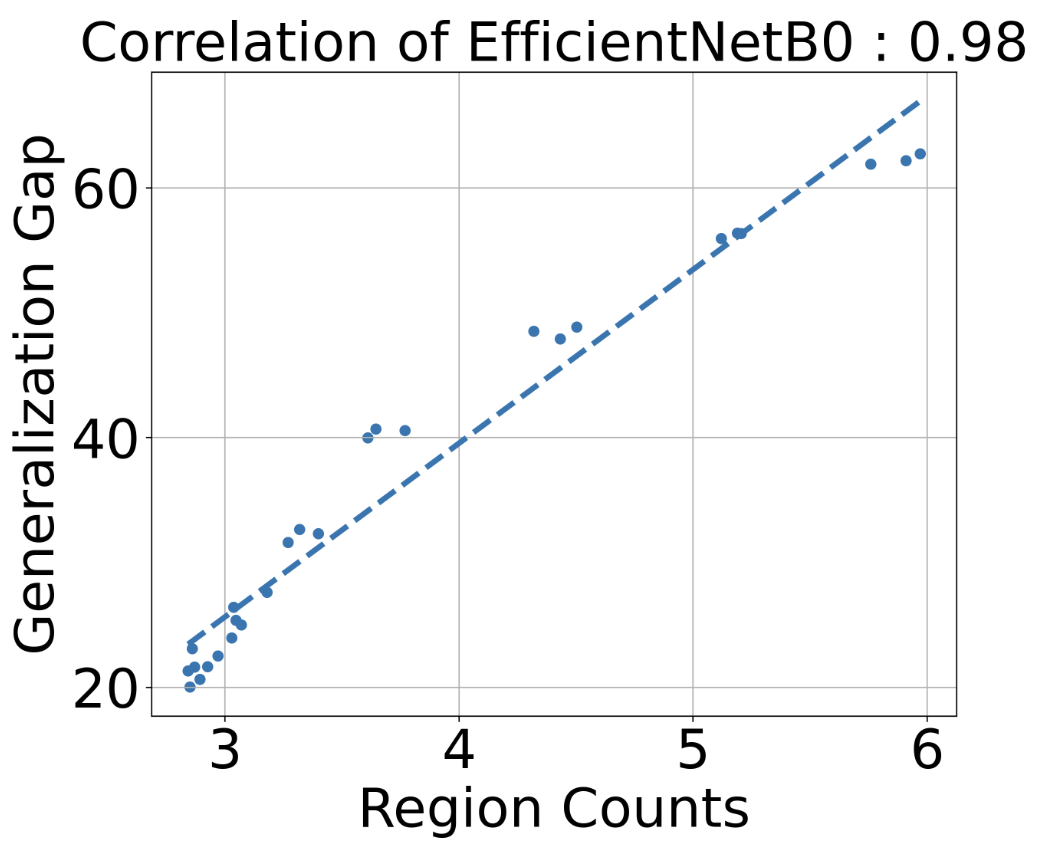}
        \centering
    \end{minipage}

    \begin{minipage}[b]{0.31\linewidth}
        \includegraphics[width=\linewidth]{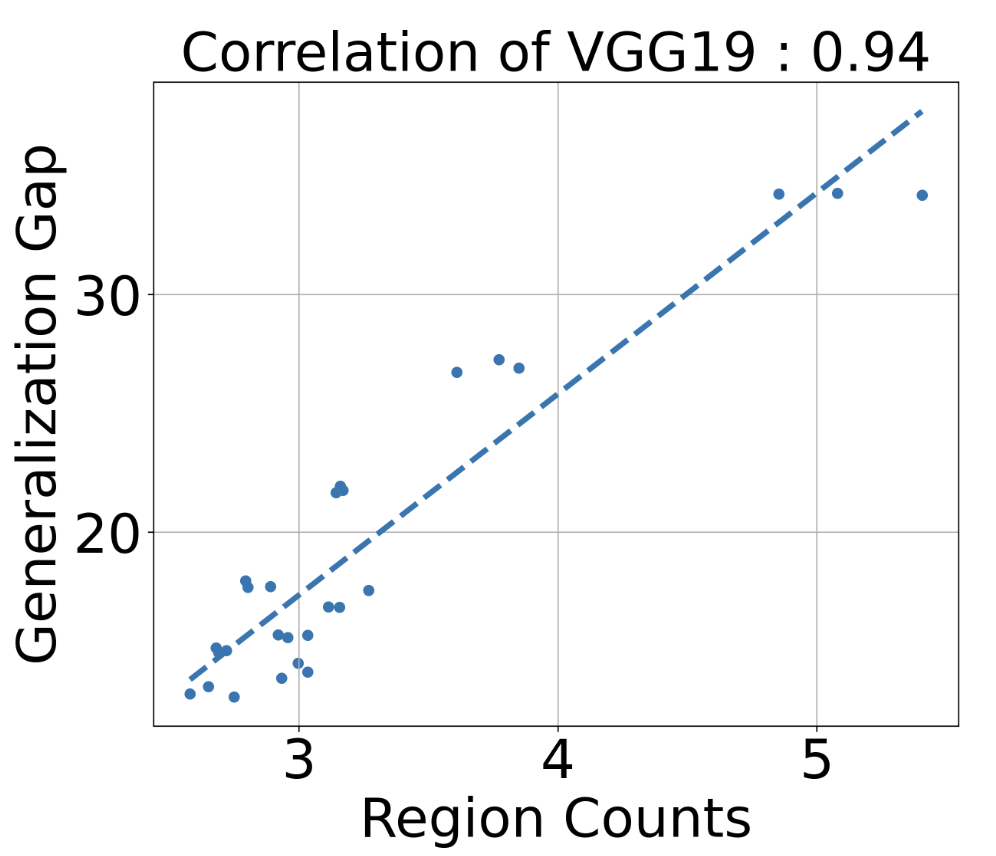}
        \centering
    \end{minipage}
    \hfill
    \begin{minipage}[b]{0.317\linewidth}
        \includegraphics[width=\linewidth]{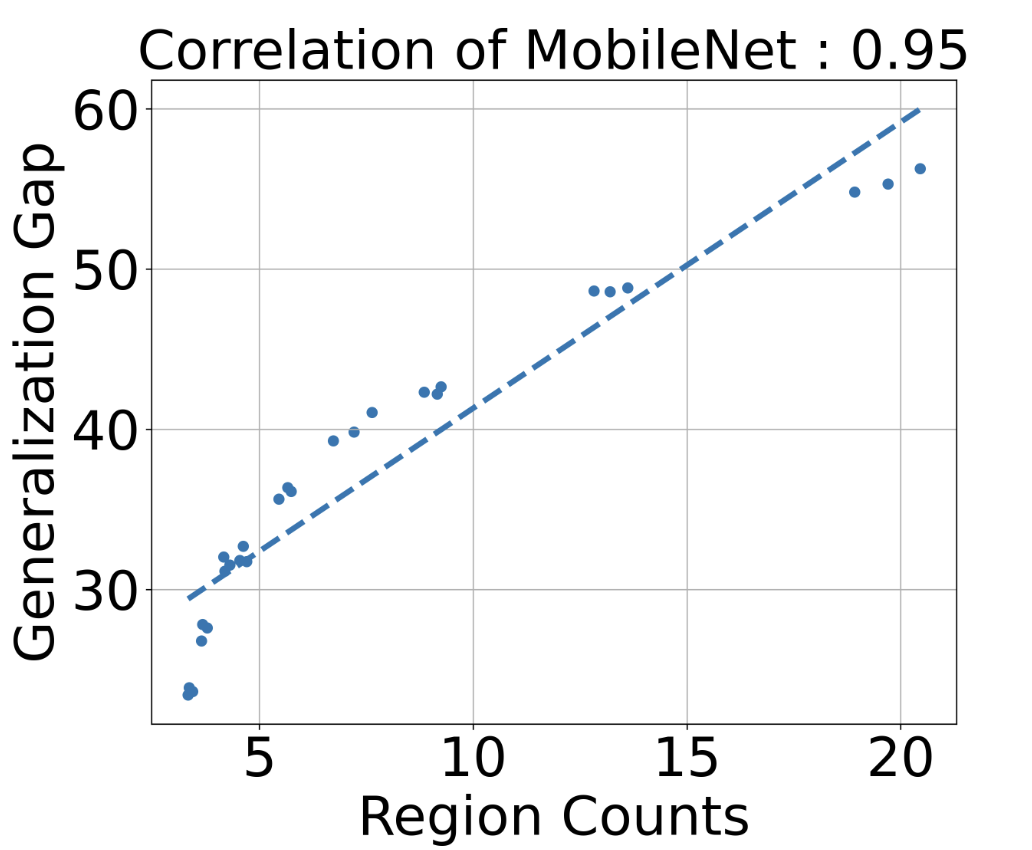}
        \centering
    \end{minipage}
    \hfill
    \begin{minipage}[b]{0.32\linewidth}
        \includegraphics[width=\linewidth]{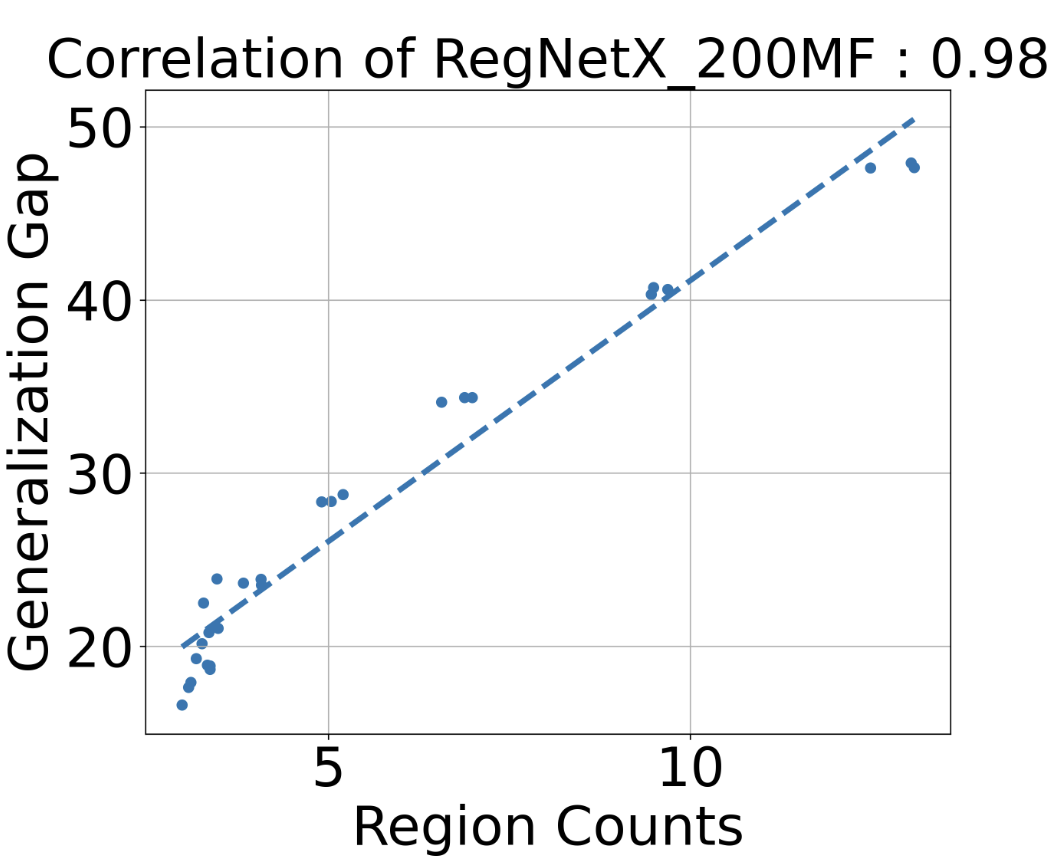}
        \centering
    \end{minipage}
    
    \begin{minipage}[b]{0.314\linewidth}
        \includegraphics[width=\linewidth]{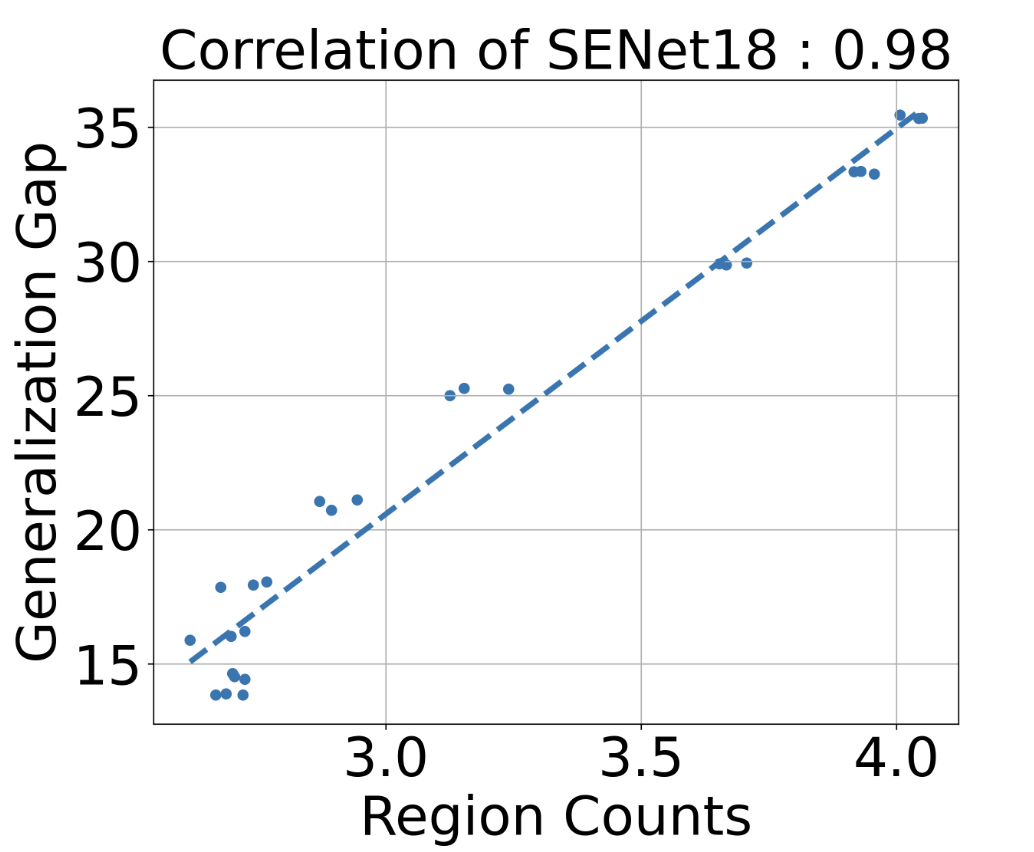}
        \centering
    \end{minipage}
    \hfill
    \begin{minipage}[b]{0.321\linewidth}
        \includegraphics[width=\linewidth]{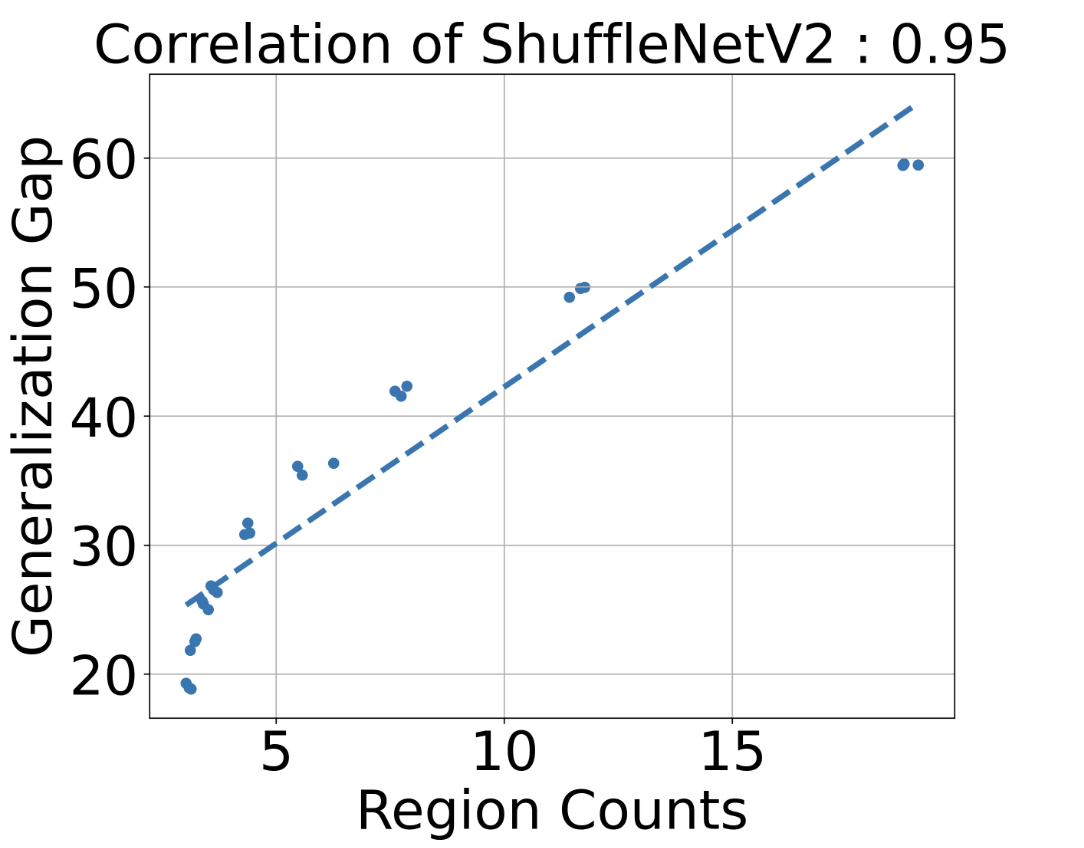}
        \centering
    \end{minipage}
    \hfill
    \begin{minipage}[b]{0.317\linewidth}
        \includegraphics[width=\linewidth]{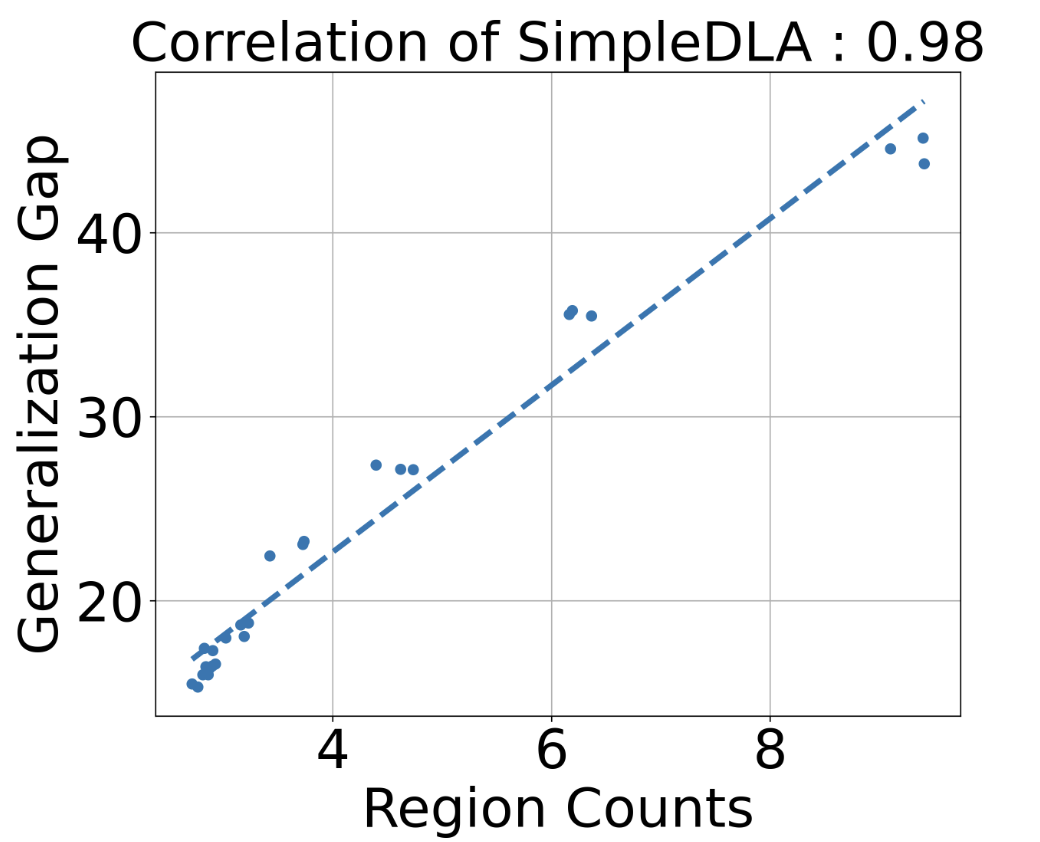}
        \centering
    \end{minipage}
    \caption{The correlation plot of all networks between average regions and test accuracy for CIFAR-10 dataset with optimizer SGD. From the graph we know that the correlations are all very high for different non-linear networks. Different structures of neural networks incur different scope of the average regions. }
    \label{fig:all1}
\end{figure}


\section{How to Calculate the Number of Regions}
\label{app:count}

In this section, we study different methods to calculate the region count, and discuss their impact on the results. Since it is impossible in practice to calculate the predictions of an infinite number of data points on the hyperplane, we select grid points from the hyperplane to calculate the region count. 

Assume we have divided a region of the hyperplane into several equidistant small squares. We can use an algorithm similar to breadth-first search to calculate the number of connected components within these small squares, thereby determining the number of regions. Here, we use a 2-dimensional hyperplane as an example (the 1-dimensional case can be considered a degenerate version of this algorithm). The algorithm for calculating the number of regions in this setup is given in Algorithm~\ref{algo:calculate region}.

\begin{algorithm}
\caption{Calculate the Number of Region}
\label{algo:calculate region}
\begin{algorithmic}[1]
\renewcommand\algorithmicrequire{\textbf{Input:}}
\REQUIRE{Prediction Matrix $P$ with dimension$(w,h)$.}
\renewcommand\algorithmicensure{\textbf{Output:}}
\ENSURE{Number of connected regions $N$.}

\STATE{Initialize a mark matrix $M$ to a zero matrix, with the same dimensions as $P$}
\STATE{Initialize count of connected regions $N \gets 0$}

\FOR{$i \gets 0$ \textbf{to} $w-1$}
\FOR{$j \gets 0$ \textbf{to} $h-1$}
\IF{$M[i][j]$ is already marked}
\STATE \textbf{continue}
\ENDIF
\STATE{Mark position $(i,j)$ in $M$ as visited}
\STATE{Perform Breadth-First Search (BFS) starting from position $(i,j)$}
\STATE{In BFS, enqueue all neighboring points that have the same value as $(i,j)$ in $P$ and mark them as visited in $M$}
\STATE{Continue BFS until the queue is empty}
\STATE{Increment count of connected regions $N \gets N + 1$}
\ENDFOR
\ENDFOR
\STATE \textbf{return} $N$
\end{algorithmic}
\end{algorithm}

Therefore, it is necessary to determine the granularity of splits for the plane. We experimented with different setups of splitting parameters, and the results averaged by 100 independent trials are presented in Table~\ref{tab:split1d}. From the results, using 200 grid points in the 1D case and 30x30 grid points in the 2D case is an optimal choice. Splitting the plane into fewer points results in an inadequate approximation of regions, while increasing the number of points does not significantly enhance accuracy but incurs greater computational costs. Therefore, in our experiments, we split the plane into 200 grid points for the 1D case and 30x30 grid points for the 2D case.

\begin{table}[h]
\small
\centering
\captionsetup{skip=5pt}
\caption{The mean value of region counts with different splitting numbers in 1d (left) and 2d (right) planes. }
\label{tab:splitting}
\begin{minipage}{0.45\linewidth}
\centering
\label{tab:split1d}
\begin{tabular}{cc}
\toprule  
Splitting Numbers & Region Counts \\
\midrule
50  & 2.74\\ 
100 & 2.76\\
200 & 2.78\\
300 & 2.78\\
500 & 2.78\\
\bottomrule
\end{tabular}
\end{minipage}%
\hfill
\begin{minipage}{0.45\linewidth}
\centering
\label{tab:split2d}
\begin{tabular}{cc}
\toprule  
Splitting Numbers & Region Counts \\
\midrule
10$\times$10   & 2.76\\ 
20$\times$20   & 2.76\\
30$\times$30   & 2.78\\
40$\times$40   & 2.78\\
50$\times$50   & 2.78\\
\bottomrule
\end{tabular}
\end{minipage}
\end{table}

Subsequently, we study the number of random samples in calculating the average number of regions. We experiment with different numbers of hyperplanes, and the results are presented in Table~\ref{tab:average}. 
From the results, we know that using 100 samples to calculate the average provides a reliable answer with relatively low computational costs. Therefore, in the experiments we randomly generate 100 lines or planes and calculate the average number of regions.

\begin{table}[h]
\small
\centering
\captionsetup{skip=5pt}
\caption{The mean value of region counts with different number of random samples. }
\label{tab:average}
\begin{tabular}{ccc}
\toprule  
 Number of Samples & Region Counts   \\
\midrule
 10  &2.24  \\ 
 50 & 2.56  \\ 
 100 & 2.78\\
 300 & 2.80\\
 500 & 2.79\\
\bottomrule
\end{tabular}
\end{table}

In our paper we use the convex hull of two points $\{\alpha x_1 + (1-\alpha x_2)\}$ to calculate the region counts in 1D case with $\alpha \in [0,1]$. We also conduct ablation studies with varied coordinate ranges $\alpha$. We train ResNet18 on CIFAR10 using hyperparameter in Table 1 in our manuscript, where we vary the range of $\alpha$ and analyze the correlation. The results are shown in Table~\ref{tab:scope}. These studies confirm that expanding the range does not influence the strong correlation between region counts and test accuracy.

\begin{table}[H]
\small
\centering
\captionsetup{skip=5pt}
\caption{\small{The impact of interpolation range on region counts.}}
\label{tab:scope}
\begin{tabular}{ccc}
\toprule  
  The range of $\alpha$&   Region Counts &    Correlation \\
\midrule
 $[0,1]$  &  3.56 & 0.98  \\ 
 $[-1,2]$  & 4.47 & 0.96 \\ 
 $[-2,3]$ & 5.86 & 0.92\\ 
 $[-3,4] $ & 6.39&0.93\\ 
\bottomrule
\end{tabular}
\end{table}

\newpage
\newpage
\section{More ablation studies}
\label{app:ablation}
\paragraph{Gradient Optimizers.}
We calculate the region count of models trained by different optimizers, including SGD, Adam, and Adagrad. The correlation between region count and the generalization gap is consistent for them, as detailed in Table~\ref{tab:optimizer variance}.
\begin{table}[h]
\small
\centering
\captionsetup{skip=5pt}
\caption{\small{The impact of optimizers on the correlation between region counts and generalization gap.}}
\label{tab:optimizer variance}
\begin{tabular}{ccccc}
\toprule  
  \diagbox{Network}{Optimizer}&  \emph{SGD} & \emph{Adam} & \emph{Adagrad} \\
\midrule
 ResNet18 & 0.98 & 0.92 &0.96\\ 
 ResNet34 & 0.98& 0.92 &0.91\\ 
 VGG19 & 0.94 & 0.92& 0.87\\ 
 MobileNet & 0.95 & 0.95& 0.99\\  
 SENet18 & 0.98 & 0.78& 0.91\\ 
 ShuffleNetV2 & 0.95& 0.83& 0.99\\ 
 EfficientNetB0 & 0.98&0.97 & 0.99\\ 
 RegNetX\_200MF & 0.98&0.95 &0.99 \\ 
 SimpleDLA & 0.98&0.95 & 0.88\\ 
\bottomrule
\end{tabular}
\end{table}

\paragraph{Hyperplane Generation Methods.}
We explore the effects of different methods for generating the hyperplane in the input space. In the main experiments, we generate a 1-dimensional plane using random pairs of samples from the training set and calculated the number of distinct regions between them. In this section, we apply various techniques for plane generation: selecting two data points from the test set, choosing one data point from the training set and extending it in a random direction by a fixed length. We calculate the number of regions for each of these setups. The results in Table~\ref{tab:counting variance} are consistent across different hyperplane computational approaches.

\begin{table}[h]
\small
\centering
\captionsetup{skip=5pt}
\caption{\small{The impact of calculation methods on the correlation between region counts and generalization gap.}}
\label{tab:counting variance}
\begin{tabular}{cccc}
\toprule  
  \diagbox{Network}{Counting}&  \emph{Test} & \emph{Train} & \emph{Random}  \\
\midrule
 ResNet18 & 0.98 & 0.98 & 0.98  \\ 
 ResNet34 & 0.98 & 0.96&0.94  \\ 
 VGG19 & 0.94& 0.89 & 0.78 \\ 
 MobileNet & 0.95 & 0.94 &  0.88\\ 
 SENet18 & 0.98 &0.96 & 0.99 \\ 
 ShuffleNetV2 & 0.95& 0.95&0.92  \\ 
 EfficientNetB0 & 0.98& 0.98& 0.92\\ 
 RegNetX\_200MF & 0.98& 0.97& 0.92  \\ 
 SimpleDLA &0.98 & 0.97 & 0.96\\ 
\bottomrule
\end{tabular}
\end{table}

\paragraph{Data Augmentations.}

Here, we present the correlation curves after applying mixup and random crop. The results show that the correlation between region count and generalization gap remains high after using these data augmentation techniques.

\begin{figure*}[h]
  \centering
    \includegraphics[scale = 0.33]{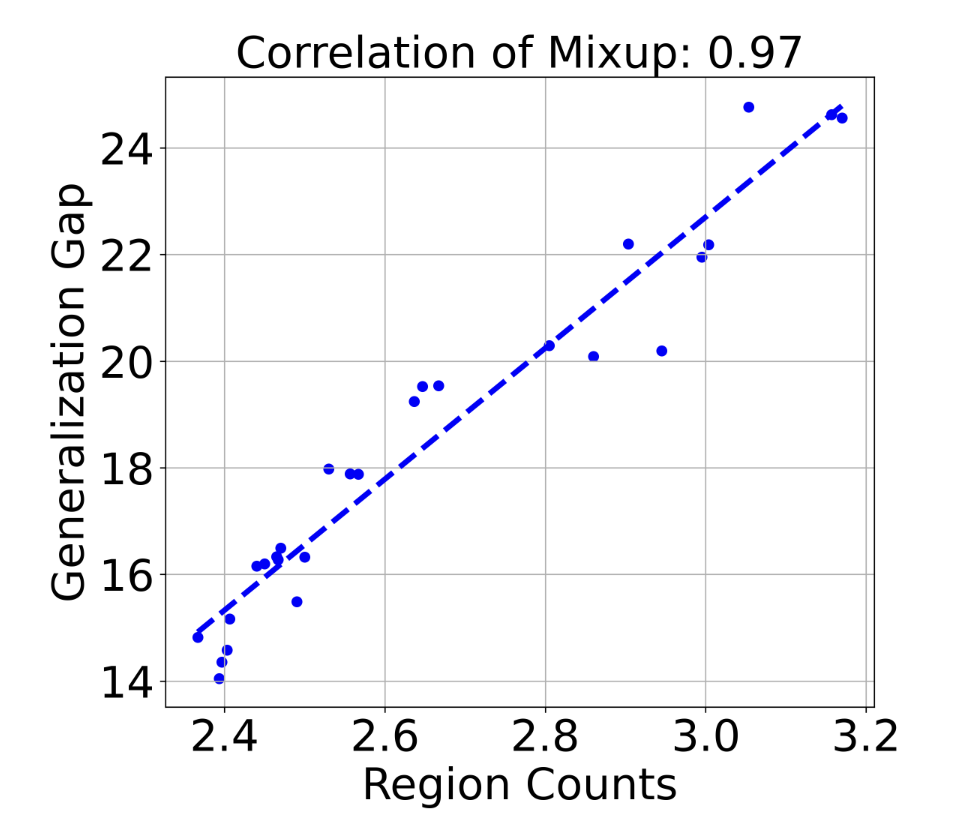}\hspace{1.8cm}
    \includegraphics[scale = 0.395]{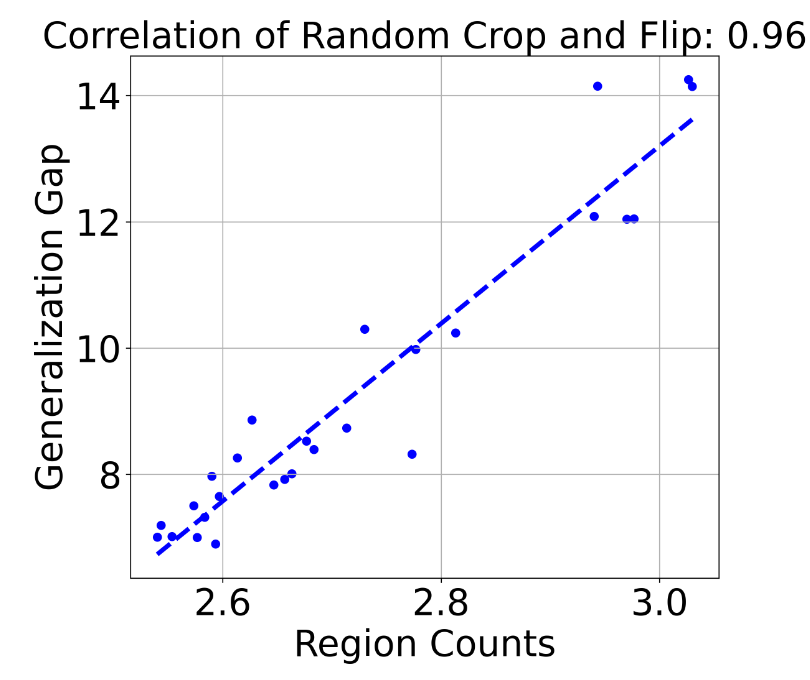}
    \caption{\textbf{The correlation graph after using data augmentation techniques.} We train Resnet18 on the CIFAR-10 dataset, varying the hyperparameters in Table~\ref{tab:hyperparameter}. Our findings reveal that the correlation between region count and generalization gap remains high after using these techniques.}
\end{figure*}

\section{Proof}
\label{app:proof}
This section contains the proof of the theorem in this paper.





We first prove two lemmas. 
\lemregion*
\begin{proof}
    If $R(x_a, x_b, W)\le 2$, then the equation naturally holds. Next we consider $R(x_a, x_b, W)>2$.
    From the definition of region count, one can find $R:=R(x_a, x_b, W)$ points on the line segment between $x_a$ and $x_b$, such that the neural network gives different predictions. Denote these points as $\Tilde{x}_1, \cdots, \Tilde{x}_R$. We have $$f_W(\Tilde{x}_i)f_W(\Tilde{x}_{i+1})<0, 0\le i\le R-1.$$

    Consider $\Tilde{x}_i, \Tilde{x}_{i+1}, \Tilde{x}_{i+2}$. Since the neural network gives alternating predictions on these three points, it is nonlinear and has activation sign changes on the line segment connecting them. Therefore, we can find a $1\le n(i)\le p$, such that $(w_{n(i)}^\top \Tilde{x}_i)(w_{n(i)}^\top \Tilde{x}_{i+2})< 0$.
    
    We prove it by contradiction. If for all $1\le i\le p$, such that $(w_i^\top \Tilde{x}_i)(w_i^\top \Tilde{x}_{i+2})\ge  0$. Suppose $\Tilde{x}_{i+1}=\lambda \Tilde{x}_{i} + (1-\lambda)\Tilde{x}_{i+2}$, then we have 
    \begin{align*} f_W(\Tilde{x}_{i+1}) &= \sum_{i=1}^p a_i \sigma(w_i^\top x_{i+1})\\
    &= \sum_{i=1}^p a_i [\lambda\sigma(w_i^\top x_{i}) + (1-\lambda)\sigma(w_i^\top x_{i+2})] \\
    &= \lambda f_W(\Tilde{x}_{i}) + (1-\lambda)f_W(\Tilde{x}_{i+2}) \,.
    \end{align*}
    Therefore $f_W(\Tilde{x}_{i+1})$ has the same sign of $f_W(\Tilde{x}_{i})$ and $ f_W(\Tilde{x}_{i+2})$, contradict with the condition that they have alternative signs. So we can find a $1\le n(i)\le p$, such that $(w_{n(i)}^\top \Tilde{x}_i)(w_{n(i)}^\top \Tilde{x}_{i+2})< 0$.

    Since $w_{n(i)}^\top x$ is linear in $x$, this implies that $$(w_{n(i)}^\top x_a)(w_{n(i)}^\top x_b)< 0.$$
We also prove it by contradiction. If they have the same sign then the convex combination of them have the same sign so $(w_{n(i)}^\top \Tilde{x}_i)(w_{n(i)}^\top \Tilde{x}_{i+2})\ge 0$.

    We have the following two observations about $n(i)$. Firstly, we can choose an $n(i)$ such that $a_{n(i)}w_{n(i)}^\top \Tilde{x}_{i+2}$ and $f_W(\Tilde{x}_{i+2})$ have the same sign, since there exists at least one such neuron that contributes to the sign change of $f_W$.
    This implies that $n(i)\neq n(i+1)$, since $f_W(\Tilde{x}_{i})$ have alternating signs. Secondly, since $w_{n(i)}^\top x$ is a linear function in $x$, it can only changes sign for at most one time. This implies that $n(i)\neq n(j)$ if $j-i\ge 2$. Putting them together, we know that $n(i)\neq n(j)$ for $i\neq j$.

    Recall that for each $1\le i\le R-2$, we have $(w_{n(i)}^\top x_a)(w_{n(i)}^\top x_b)< 0$. Therefore, there exists $R-2$ neurons that are  activated for either $x_a$ or $x_b$. This gives $N(x_a, W)+N(x_b, W)\ge R-2$, which completes the proof. 
    
\end{proof}

\lemsmooth*

\begin{proof}
    The Hessian of $l(W, x, y)$ can be expressed as 
    $$ \nabla^2_W l(W,x,y)=
    \begin{bmatrix}
    v_1v_1^\top & \cdots  & v_1v_p^\top \\      
    \vdots & & \vdots \\       
    v_pv_1^\top & \cdots & v_pv_p^\top 
    \end{bmatrix},
    $$
    where $v_i = a_i \sigma'(w_i^\top x)x$. Suppose $V = [v_1^\top, \cdots , v_p^\top]$. As the nonzero eigenvalue of $V^\top V$ and $V V^\top$ is the same, this implies that 
    $$\lambda_{\max}(\nabla^2_W l(W,x,y)) = \lambda_{\max}(V V^\top) =\sum_{i=1}^p \|v_i\|_2^2=\sum_{i=1}^p \sigma' (w_i^
    \top x) \|x\|^2\ge \sum_{i=1}^p \sigma' (w_i^
    \top x) r^2.$$

    From the definition of $\nabla^2_W L(W)$ and the positive definiteness of Hessian matrices, we know that 
    \begin{align*}
\lambda_{\max}\left(\nabla^2_W L(W)\right) &= \frac{1}{N} \lambda_{\max}\left(\sum_{i=1}^N \nabla^2_W l(W,x_i,y_i)
        \right)\ge \frac{1}{N^2} \sum_{i=1}^N \lambda_{\max}\left(\nabla^2_W l(W,x_i,y_i) 
        \right).
    \end{align*}
    Plug in the previous calculation and use the definition of $N(x)$, we have 
    \begin{align*}
        \lambda_{\max}\left(\nabla^2_W L(W)\right) \ge \frac{r^2}{N^2} \sum_{i=1}^N\sum_{j=1}^p \sigma'(w_i^\top x_j) = \frac{r^2}{N^2} \sum_{i=1}^N N(x_i, W).
    \end{align*}
\end{proof}

\begin{proof}[Proof of Theorem~\ref{thm:2}]
Theorem~\ref{thm:2} is a direct consequence of Assumption~\ref{asm:eos} and the following two lemmas.
    \begin{align*}
    \E_{X, X'}[R(X, X', W_t)] &= \frac{1}{N^2}\sum_{i=1}^N \sum_{j=1}^N R(x_1, x_2, W_t)
    \\&\le \frac{1}{N^2}\sum_{i=1}^N \sum_{j=1}^N (N(x_i, W_t)+N(x_j, W_t)+2)
    \\&= \frac{2}{N} \sum_{i=1}^N (N(x_i, W_t)+1)
    \\& \le \frac{2N}{r^2}\lambda_{\max}(\nabla_W^2 L(W_t))+2
    \\&= O\left(\frac{N}{r^2\eta}\right)
\end{align*}
\end{proof}
\end{document}